\def\eqref#1{equation~\ref{#1}}
\def\1{\bm{1}}
\DeclareMathAlphabet{\mathsfit}{\encodingdefault}{\sfdefault}{m}{sl}
\SetMathAlphabet{\mathsfit}{bold}{\encodingdefault}{\sfdefault}{bx}{n}
\def\gX{{\mathcal{X}}}
\newcommand{\R}{\mathbb{R}}
\newcommand{\ip}[2]{\left\langle #1, #2 \right\rangle}
\setlist[itemize]{
  noitemsep, topsep=0pt, leftmargin=*
}
\setlist[enumerate]{
  noitemsep, topsep=0pt, leftmargin=*
}
\newtheorem{theorem}{Theorem}
\newtheorem{lemma}{Lemma}
\newtheorem{proposition}{Proposition}
\newtheorem{definition}{Definition}
\newtheorem{remark}{Remark}
\newcommand{\epamem}{\textsc{LSR Energy}}
\newcommand{\lsemem}{\textsc{LSE Energy}}
\newcommand{\bmu}{\boldsymbol{\mu}}
\newcommand{\bxi}{\boldsymbol{\xi}}
\newcommand{\bXi}{\boldsymbol{\Xi}}
\newcommand{\bI}{\mathbf{I}}
\newcommand{\bx}{\mathbf{x}}
\newcommand{\bX}{\mathbf{X}}
\newcommand{\bz}{\mathbf{z}}
\newcommand{\Elsemem}{E^{\text{LSE}}}
\newcommand{\Eepamem}{E^{\text{LSR}}}
\newcommand{\norm}[1]{\left\lVert#1\right\rVert}
\newcommand{\relu}[1]{\operatorname{ReLU}\left(#1\right)}
\newcommand{\supp}[1]{\operatorname*{supp}\!\left[#1\right]}
\newcommand{\indicator}[1]{\,\mathbbm{1}\!\!\left[#1\right]}
\newcommand{\iset}[1]{\llbracket #1 \rrbracket}
\newcommand{\hexcolor}[2]{\color[HTML]{#1}#2}
\renewcommand{\section}{\@startsection{section}{1}{0mm}%
  {0.0ex plus .5ex minus -.2ex}%
  {0.5ex plus .2ex}%
  {\normalfont\large\bfseries}}
\title{Dense Associative Memory with Epanechnikov~Energy}
\author{%
  Benjamin Hoover \\
  IBM Research\\
  Georgia Tech\\
  \And
  Zhaoyang Shi\\
  Harvard\\
  \And
  Krishnakumar Balasubramanian\\
  UC Davis\\
  \AND
  Dmitry Krotov \\
  IBM Research \\
  \And
  Parikshit Ram \\
  IBM Research \\
}
\begin{document}
\maketitle
\vspace{-10pt}
\begin{abstract}
We propose a novel energy function for Dense Associative Memory (DenseAM) networks, the log-sum-ReLU (LSR), inspired by optimal kernel density estimation. Unlike the common log-sum-exponential (LSE) function, LSR is based on the Epanechnikov kernel and enables exact memory retrieval with exponential capacity without requiring exponential separation functions. Moreover, it introduces abundant additional \emph{emergent} local minima while preserving perfect pattern recovery --- a characteristic previously unseen in DenseAM literature. Empirical results show that LSR energy has significantly more local minima (memories) that have comparable log-likelihood to LSE-based models. Analysis of LSR's emergent memories on image datasets reveals a degree of creativity and novelty, hinting at this method's potential for both large-scale memory storage and generative tasks. 
\end{abstract} %
\section{Associative Memories and Energy Landscapes}
\label{sec:intro}
%
%
%
Energy-based Associative Memory networks or AMs are models parameterized with $M$ ``memories'' in $d$ dimensions, $\bXi = \{\bxi_\mu \in \R^d, \mu \in \iset{M} \}$. A popular class of models from this family can be described by an energy function defined on the  state vector  $\bx \in \gX \subseteq \R^d$:
\begin{align} \label{eq:gen-energy}
E_\beta(\bx; \bXi) = -Q \left[ \sum_{\mu = 1}^M F \left( \beta S\left( g(\bx), \bxi_\mu \right)  \right) \right],
\end{align}
where $g: \R^d \to \R^d$ is a vector operation (such as binarization, (layer) normalization), $S: \R^d \times \R^d \to \R$ is a similarity function (e.g., dot-product, negative Euclidean distance), $\beta > 0$ denotes the inverse temperature, $F: \R \to \R$ is a rapidly growing separation function (power, exponential) and $Q$ is a monotonic scaling function (logarithm, linear) \citep{krotov2016dense,krotov2023new,hoover2024dense}. With $g$ as the sign-function, $\bxi_\mu \in \{-1, +1\}^d$, $S(\bx, \bx') = \ip{\bx}{\bx'}$ and $F$ as the quadratic function, and $Q$ as a linear function, we recover the classical Hopfield model \citep{hopfield1982neural}. The output of an associative memory (AM) corresponds to one of the local minima of its energy function. A memory $\bxi_\mu$ is said to be retrieved if $\bx \approx \bxi_\mu$ corresponds to such a local minimum, with exact retrieval occurring when $\bx = \bxi_\mu$. The memory capacity of the AM is defined as the maximum number $M^\star$ of correctly retrieved memories. For classical AMs, the capacity scales as $M^\star = O(d)$. With the introduction of power-law separation functions --- that is, $F(x) = x^p$ for $p > 2$ --- modern \emph{Dense Associative Memories} (DenseAMs) achieve a significantly higher capacity of $M^\star = O(d^p)$~\citep{krotov2016dense,krotov2018dense}. 

The use of an exponential separation function combined with a logarithmic scaling function ---  $F(x) = \exp(x)$ and $Q(x) = \log x$ --- leads to the widely studied log-sum-exp (LSE) energy function~\citep{demircigil2017model, ramsauer2021hopfield, krotov2021large}, yielding exponential memory capacity $M^\star \sim \exp(d)$~\citep{lucibello2024exponential}. Hierarchical organization of memories has also been explored in~\citet{krotov2021hierarchical} and \citet{hoover2022universal}. Given that the gradient of the LSE energy corresponds to a softmax over all stored patterns, recent works have proposed sparsified variants to improve scalability. In particular, \citet{hu2024sparse} and \citet{santos2024sparse} consider sparsified softmax-based gradients, effectively projecting the full gradient onto a reduced support.\footnote{Sparsified softmax-based gradients can be interpreted as specific projections of the original gradient.} Alternatively, \citet{wu2024uniform} propose learning new representations for the memories to increase capacity, while continuing to use the LSE energy in the transformed representation space.

In this work, we consider the following motivating question:
\emph{can we achieve simultaneous perfect memorization and generalization in associative memory models?}
While the exponential separation function enables DenseAMs to achieve high memory capacity, capacity alone is not the only desideratum. In standard supervised learning, it was long believed that exactly interpolating or memorizing the training data --- achieving zero training loss --- would harm generalization. However, recent advances, particularly in deep learning, have challenged this belief: models that perfectly fit the training data can still generalize well. Although this phenomenon gained prominence with deep networks, it has earlier roots in kernel methods and boosting~\citep{belkin2019does,belkin2021fit,bartlett2021deep}.

An analogous question arises in the context of associative memory (AM) models. Traditionally, AMs focus on storing a fixed set of patterns. But from a broader machine learning perspective, the goal extends beyond memorization to include the generation of new, meaningful patterns. Prior work using LSE-type energy functions has shown that generating such novel patterns typically requires sacrificing perfect recall of the original patterns. This trade-off highlights a core tension between memorization and generalization. To address this, we explore alternative separation functions that can preserve exact memorization while enabling the emergence of new patterns --- pushing toward models that truly unify memory and generalization. 

Our approach is also motivated by the well-established connection between the energy and probability density function. An energy function $\mathsf{E}: \R^d \to \R$ induces a probability density function $p: \R^d \to \R_{\geq 0}$ with $p(\bx) = \exp[-\mathsf{E}(\bx)] / \int_\bz \exp[-\mathsf{E}(\bz)] d\bz$. 
Conversely, given a density $p$, we have an energy $\mathsf{E}(\bx) \propto - \log p(\bx)$, the negative log-likelihood. Minimizing the energy corresponds to maximizing the log-likelihood (with respect to the corresponding density). Based on this connection, with $Q(\cdot) = \log(\cdot)$, the $\exp[-E_\beta(\bx; \bXi)] = \sum_\mu F( \beta S( \bx, \bxi_\mu) )$ in \cref{eq:gen-energy} (assuming $g$ is identity) is the corresponding (unnormalized) density at $\bx$. Assuming that the memories $\bxi_\mu \sim p$ are sampled from an unknown ground truth density $p$, the $\exp[-E_\beta(\bx; \bXi)]$ is an unnormalized {\bf kernel density estimate} or KDE of $p$ at $\bx$ with the {\em kernel} $F$ and bandwidth $1 / \beta$~\citep{wand1994kernel}. Thus, the LSE energy with $F(x) = \exp(x)$ and $S(\bx, \bx') = -\nicefrac{1}{2} \|\bx - \bx'\|^2$ corresponds to the KDE of $p$ with the Gaussian kernel.

\begin{figure}[t]
    \centering
    \includegraphics[width=\textwidth]{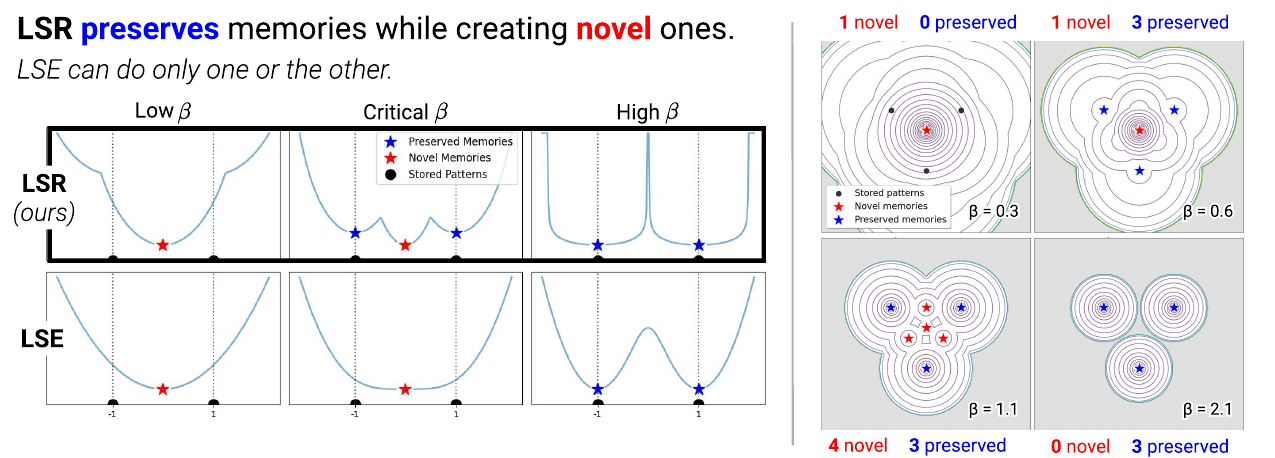}
    \caption{LSR energy can create more memories than there are stored patterns under critical regimes of $\beta$. Left: 1D LSR vs LSE energy landscape. Note that LSE is never capable of having more local minima than the number of stored patterns. Right: 2D LSR energy landscape, where increasing $\beta$ creates novel local minima where basins intersect. Unsupported regions are shaded gray.}
    \label{fig:fig1}
\end{figure}

KDE is well studied in nonparametric statistics~\citep{wand1994kernel,devroye2001combinatorial}, and various forms of kernels have been explored. The quality of the estimates are well characterized in terms of properties on the kernels; we will elaborate on this in the sequel. While the Gaussian kernel is extremely popular for KDE (much like LSE in AM literature), there are various other kernels which have better estimation abilities than the Gaussian kernel. Among the commonly used kernels, the Epanechnikov kernel has the most favorable estimation quality (see \cref{sec:kde}). In our notation, this corresponds to a kernel $F(x) = \max(1 + x, 0) = \relu{1+x}$, a shifted ReLU operation (again with $S(\bx, \bx') = -\nicefrac{1}{2} \|\bx - \bx' \|^2$). This results in a novel energy function that we name log-sum-ReLU or LSR (see \cref{eq:lsr-energy}). \emph{Surprisingly, we show that this energy function is capable of both exactly memorizing all $M$ original patterns (with $M \sim \exp(d)$) and simultaneously generating new, meaningful emergent memories} (see \Cref{def:global-emg}). This defies the conventional tradeoff seen in prior AM models --- where improving generalization (ability to create emergent local minima) typically requires compromising exact memorization --- and reveals that precise memory storage and creative pattern generation are not inherently at odds with one another. To summarize, we make the following contributions in this work:
\begin{itemize}
    \item \textbf{Novel ReLU-based energy function with exponential memory capacity.} We propose a LSR energy function for DenseAM using the popular ReLU activation, built upon the connection between energy functions and densities. In \Cref{thm:retrieval,thm:main} respectively, we demonstrate exact retrieval and exponential memory capacity of LSR energy, without the use of $\exp(\cdot)$ separation function.
    \item \textbf{Simultaneous memorization and emergence.} We show that this LSR energy has a unique property of {\em simultaneously} being able to \emph{exactly} retrieve all original memories (training data points) while also creating many {\em emergent} memories (additional local minima). The total number of memories of LSR can exceed the number of stored patterns, a property absent with LSE (see \Cref{prop:LSE_no_epsilon_emg}). When applied to images, LSR can generate novel and seemingly creative memories that are not present in the training dataset.
    \end{itemize} %
\section{Kernel Density Estimation and the Choice of Kernels}
\label{sec:kde}
%
%
We now provide a brief overview of Kernel Density Estimation (KDE) considering the univariate setting for simplicity; similar conclusions also hold in higher dimensions. Given a sample \(\bXi = \{\xi_\mu \in \mathbb{R}, \mu \in \iset{ M }\}\) drawn from an unknown density \(f\), the KDE is defined as $\hat{f}_h(\xi) = ( M h )^{-1} \sum_{\mu=1}^{M} K\left(\frac{\xi - \xi_\mu}{h}\right)$, where \(K(\cdot)\) is the kernel function and \(h > 0\) is the bandwidth parameter. The kernel function is assumed to satisfy: (i) symmetry (i.e., $K(- x ) = K( x )$, for all $x \in\mathbb{R}$), (ii) positivity (i.e., $K(x) \geq 0$, for all $x\in\mathbb{R}$) and (iii) normalization (i.e., $\int_x K( x ) \, dx = 1$). Note that for the purpose of KDE, the scale of the kernel function is not unique. That is, for a given $K(\cdot)$, we can define $\tilde{K}(\cdot)= b^{-1}K(\cdot/b)$, for some $b>0$. Then, one obtains the same KDE by rescaling the choice of $h$. Hence, the shape of the kernel function plays a more important role in determining the choice of the kernel. We now introduce two parameters associated with the kernel, $\mu_K \coloneqq \int_x x^2 K( x ) \, dx$ and $\sigma_K \coloneqq \int_x  K^2( x ) \, dx$ that correspond to the \emph{scale} and \emph{regularity} of the kernel. We will discuss below how the \textit{generalization error} of KDE depends on the aforementioned parameters.

The \textit{generalization error} of $ \hat{f}_h(\xi) $ is measured by the Mean Integrated Squared Error (MISE), given by 
$\text{MISE}(h) = \mathbb{E} \left[ \int_\xi ( \hat{f}_h(\xi) - f(\xi))^2 d\xi \right]$.  Assuming that the ground-truth density $f(\xi)$ is twice continuously differentiable, a second-order Taylor expansion gives the leading terms of the $\text{MISE}(h)$, which decomposes into squared bias and variance  terms: $\textsf{MISE}(h) \approx \frac{\mu_K^2}{4} h^4 \int_\xi | f^{''}(\xi)|^2d\xi + \frac{\sigma_K}{M h}$; see~\citet[Section 2.5]{wand1994kernel} for details. This result shows that reducing \( h \) decreases bias but increases variance, while increasing \( h \) smooths the estimate but introduces bias, highlighting the bias-variance trade-off. The optimal mean-square is obtained by minimizing $\text{MISE}(h)$ with respect to $h$. We thus obtain the optimal choice of $h$ and the optimal generalization accuracy as
\begin{align}
 h_* \coloneqq \left(
   \frac{\sigma_K}{ M \mu_K^2 }\frac{4}{\int_\xi | f^{''}(\xi)|^2d\xi} 
 \right)^{1/5} 
 \quad\text{and}\quad
 \textsf{MISE}(h_*) \approx \frac{5}{4} \left(
   \frac{\sqrt{\mu_K} \sigma_K \int_\xi |f''(\xi)|^2 d\xi}{ M }
 \right)^{4/5},
\end{align}
respectively. From this, we see that the choice of the kernel $K$ in the KDE, controls the generalization error via the term $\sqrt{\mu_K} \sigma_K$. %

Thus, a natural question is to find the choice of kernel $K(\cdot)$ that results in the minimum $\textsf{MISE}(h_*)$. As discussed above, the scale of the kernel function is non-unique. Hence, the problem boils down to minimizing $\sigma_K$ (which is regularity parameter of the kernel, determining the shape), subjected to $\mu_K=1$ (without loss of generality), over the class of normalized, symmetric, and positive kernels. This problem is well-studied (see, for example,~\cite{epanechnikov1969non},~\cite{muller1984smooth},~\citet[Section 2.7]{wand1994kernel}), and, as it turns out, the Epanechnikov kernel $K_\textsf{epan}(x) = \max\{1 - x^2, 0\} = \relu{1 - x^2}$ achieves the optimal \textit{generalization error}. The quantity, $\textsf{Eff(K)}\coloneqq \sigma_K /\sigma_{K_\textsf{epan}}$ is hence referred to as the efficiency of any kernel with respect to the Epanechnikov kernel. A description of choices for kernel functions and their efficiency relative to the Epanechnikov kernel is provided in \Cref{sec:on-kernels}.

{\color{black} The number of modes of the KDE has also been examined in the literature, mostly when the target is unimodal. Assuming that the target is unimodal, a direct consequence of \citet[Theorem 1]{mammen1995qualitative} on the number of modes of the KDE when $d=1$ (also see \citet[Theorem 1.1]{geshkovski2024number}) is that the number of modes of KDE with a Gaussian kernel with bandwidth $h$ is $\widetilde{\Theta}(1/\sqrt{h})$; see \citet[Section 1.2]{geshkovski2024number} for extensions to dimension $d>1$.}
\section{A New Energy Function with Emergent Memory Capabilities}
\label{sec:methods}

So far, we have seen the relationship between the LSE energy and the KDE, i.e., $\text{exp}[-E_\beta(\bx;\bXi)]$ is an unnormalized kernel density estimate with the Gaussian kernel and the bandwidth $1/\beta$, and the optimality of using the Epanechnikov kernel in KDE. Given these observations, we will explore the use of the corresponding shifted-ReLU separation function $\relu{1+x}$ in the energy function instead of the widely used exponentiation. Before we state the precise energy functions, we compare and contrast the shapes of these separation functions $F(\beta x)$ in \cref{fig:separations} for varying values of the inverse temperature $\beta$.
Note that, as the $\beta$ increases, both these separation functions decay faster. However, as expected, the shifted-ReLU separation linearly decays and then zeroes out.

Recall that {\lsemem} is given by $\Elsemem_\beta(\bx; \bXi)=-\frac{1}{\beta} \log \sum_{\mu=1}^M \exp (- \frac{\beta}{2} \norm{\bx - \bxi_\mu}^2 )$. Based on the discussion on separation functions, our proposed {\epamem} (which we also refer to as Epanechnikov energy) is given by
\begin{align}\label{eq:lsr-energy}
\Eepamem_\beta(\bx; \bXi)=-\frac{1}{\beta} \log \left( \epsilon + \sum_{\mu=1}^M \relu{1 - \frac{\beta}{2} \norm{\bx - \bxi_\mu}^2} \right),
\end{align} 
%
%
where $\norm{\cdot}$ describes the Euclidean norm and $\beta$ is an inverse temperature.

\begin{wrapfigure}[11]{r}{0.45\linewidth}
\centering
\vskip -0.2in
\includegraphics[width=\linewidth]{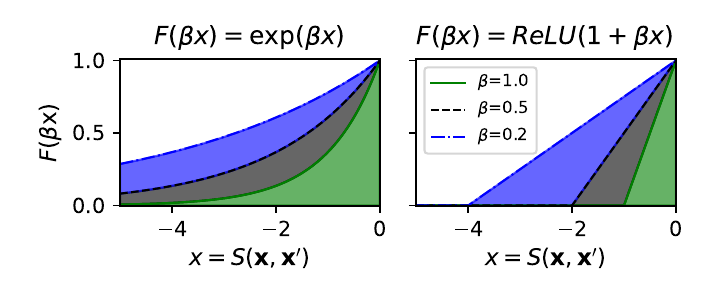}
\vskip -0.15in
\caption{Visualizing the separation functions $F(\beta x) = \exp(\beta x)$ (LSE) and $F(\beta x) = \relu{1+ \beta x}$ (LSR) with $x = S(\bx, \bx')$ for varying values of $\beta$. We focus on $S(\bx, \bx') = -\nicefrac{1}{2} \| \bx - \bx' \|^2$.}
\label{fig:separations}
\end{wrapfigure}
The factor $\epsilon \geq 0$ in the LSR energy is a small nonnegative constant, where an $\epsilon > 0$ ensures that every point in the space has finite (albeit extremely large $O(\log (1/\epsilon))$) energy for all values of $\beta$. Indeed, with $\epsilon = 0$, defining $\mathsf{S}_\mu \triangleq \{ \bx \in \gX: \norm{\bx - \bxi_\mu} \leq \sqrt{\nicefrac{2}{\beta}} \}$, it is easy to see that  $\forall \bx \in \gX \setminus \cup_{\mu = 1}^M \mathsf{S}_\mu$, $\Eepamem_\beta(\bx) = \infty$. This is a result of the finite-ness of the ReLU separation function. Regions of infinite energy implies zero probability density, which matches the finite support of the density estimate with the Epanechnikov kernel. Based on the introduced LSR energy, we next highlight the following favourable properties; see \cref{sec:appendix:proofs} for the proofs and technical details.

%
%
%
%
%
%
%
%
%
%
%
    %
%
%
%
%
%
%
\begin{theorem} \label{thm:retrieval}
Let $r = \min_{\mu, \nu \in \iset{M}, \mu \not= \nu} \norm{\bxi_\mu - \bxi_\nu}$ be the minimum Euclidean distance between any two memories. Let $S_\mu(\Delta) = \{ \bx \in \gX : \norm{\bx - \bxi_\mu} \leq \Delta \}$ be a basin around the $\mu^{\textsf{th}}$ memory for some basin radius $\Delta \in (0, r)$. Then, with $\beta = 2 / (r - \Delta)^2$, for any $\mu \in \iset{M}$ and any input $\bx \in S_\mu(\Delta)$, the output of the DenseAM via LSR energy gradient descent is exactly $\bxi_\mu$, implying that all memories $\bxi_\mu, \mu \in \iset{M}$ are retrievable. Furthermore, if the learning rate of the energy gradient descent is set appropriately, then for any $\mu \in \iset{M}$ and any $\bx \in S_\mu(\Delta)$, the memory is exactly retrieved with a single energy gradient descent step (single step retrieval).
\end{theorem}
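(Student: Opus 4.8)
The plan is to exploit the compact support of the shifted-ReLU kernel: inside any retrieval basin only a single memory term is active, and there the LSR energy reduces to an explicit, smooth function that is strictly increasing in the distance to that memory. Concretely, with $\beta=2/(r-\Delta)^2$ the $\nu$-th kernel term $\relu{1-\tfrac{\beta}{2}\norm{\bx-\bxi_\nu}^2}$ is supported exactly on the ball of radius $\sqrt{2/\beta}=r-\Delta$ about $\bxi_\nu$. Fixing $\mu$ and $\bx\in S_\mu(\Delta)$, the triangle inequality gives $\norm{\bx-\bxi_\nu}\ge\norm{\bxi_\mu-\bxi_\nu}-\norm{\bx-\bxi_\mu}\ge r-\Delta$ for all $\nu\neq\mu$, so every cross term vanishes on $S_\mu(\Delta)$, while (in the regime $\Delta<r-\Delta$, so that $S_\mu(\Delta)$ lies inside the $\mu$-th kernel's support) the $\mu$-th term is $\ge 1-\tfrac{\beta}{2}\Delta^2>0$. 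Hence on $S_\mu(\Delta)$,
\[
\Eepamem_\beta(\bx)=-\tfrac{1}{\beta}\log\!\Big(\epsilon+1-\tfrac{\beta}{2}\norm{\bx-\bxi_\mu}^2\Big),\qquad \nabla_\bx\Eepamem_\beta(\bx)=\frac{\bx-\bxi_\mu}{\epsilon+1-\tfrac{\beta}{2}\norm{\bx-\bxi_\mu}^2}.
\]

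For exact retrieval, note the denominator above is positive on $S_\mu(\Delta)$, so $-\nabla_\bx\Eepamem_\beta(\bx)$ is a positive multiple of $\bxi_\mu-\bx$ and a gradient step can be written $\bx^+=\bxi_\mu+\big(1-\tfrac{\alpha}{\epsilon+1-\tfrac{\beta}{2}\norm{\bx-\bxi_\mu}^2}\big)(\bx-\bxi_\mu)$, which stays on the segment $[\bxi_\mu,\bx]$. I would then check that for any fixed $\alpha\in(0,\ \epsilon+1-\tfrac{\beta}{2}\Delta^2]$ the bracketed factor lies in $[0,\ 1-\tfrac{\alpha}{\epsilon+1}]\subset[0,1)$, so $\norm{\bx^+-\bxi_\mu}\le(1-\tfrac{\alpha}{\epsilon+1})\norm{\bx-\bxi_\mu}$: the iterates never leave $S_\mu(\Delta)$, the one-memory form continues to hold, and $\bx^{(t)}\to\bxi_\mu$ geometrically. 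The qualitative point to stress --- and the contrast with LSE --- is that $\nabla_\bx\Eepamem_\beta(\bxi_\mu)=0$ \emph{exactly}, since every other kernel term is genuinely zero on a neighborhood of $\bxi_\mu$ rather than merely small; thus the attractor is $\bxi_\mu$ itself, not a perturbed point. Running this for each $\mu$ gives retrievability of all $M$ memories.

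For single-step retrieval, I would read off from the same update that the bracketed factor vanishes precisely when $\alpha=\epsilon+1-\tfrac{\beta}{2}\norm{\bx-\bxi_\mu}^2$, and that on $S_\mu(\Delta)$ this equals $\epsilon+\sum_\nu\relu{1-\tfrac{\beta}{2}\norm{\bx-\bxi_\nu}^2}=\exp[-\beta\,\Eepamem_\beta(\bx)]$ --- a quantity obtained directly from the current energy value. Taking the state-dependent learning rate $\alpha(\bx)=\exp[-\beta\,\Eepamem_\beta(\bx)]$ then sends any $\bx\in S_\mu(\Delta)$ to exactly $\bxi_\mu$ in a single gradient step.

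The main obstacle is essentially bookkeeping rather than a deep difficulty: one must verify that the admissible step sizes keep every iterate inside $S_\mu(\Delta)$ throughout descent so that the single-term reduction remains valid at each step (handled by the contraction estimate above), and one must keep $\Delta$ bounded away from $r$ --- essentially $\Delta<r/2$ --- so that $S_\mu(\Delta)$ is contained in the $\mu$-th kernel's support; otherwise part of the basin is a flat, zero-gradient plateau where descent stalls and retrieval fails. Everything else is elementary calculus with the explicit single-term energy.
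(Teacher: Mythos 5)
Your proposal is correct and follows essentially the same route as the paper's proof: use the triangle inequality to show that on $S_\mu(\Delta)$ only the $\mu$-th kernel term is active, reduce the gradient to $(\bx-\bxi_\mu)/(\epsilon+\relu{1-\tfrac{\beta}{2}\norm{\bx-\bxi_\mu}^2})$, observe it vanishes exactly at $\bxi_\mu$, and take the adaptive step size $\alpha(\bx)=\epsilon+\relu{1-\tfrac{\beta}{2}\norm{\bx-\bxi_\mu}^2}$ for single-step retrieval. Your additional contraction estimate for fixed step sizes is a welcome tightening the paper omits, and your caveat that one needs $\Delta\le r-\Delta$ (i.e.\ $\Delta< r/2$) so that $S_\mu(\Delta)$ lies inside the $\mu$-th kernel's support is a genuine and correct observation: for $\Delta\in(r/2,r)$ the paper's claim that $B(\bx)=\{\mu\}$ on all of $S_\mu(\Delta)$ fails (one can have $B(\bx)=\emptyset$, a zero-gradient plateau), so the theorem's stated range $\Delta\in(0,r)$ should really be restricted as you indicate.
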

The above result states that, given a set of memories, and an appropriately selected $\beta$, there is a distinct basin of attraction $S_\mu(\Delta)$ around each memory $\bxi_\mu$, and any input $\bx$ from within that basin exactly retrieves the memory as the output of the DenseAM.
\begin{remark}\label{rem:exact-recovery}
For a finite but appropriately large $\beta$, the LSR energy gradient $\nabla \Eepamem_\beta(\bxi_\mu; \bXi)$ at any memory $\bxi_\mu$ is exactly zero, implying exact retrieval of the memory. The LSE energy gradient $\nabla \Elsemem_\beta(\bxi_\mu; \bXi)$ is only approximately zero, and the retrieved point is approximately equal to an original memory~\citep[Theorem 3]{ramsauer2021hopfield}. However, if $\beta = \infty$ then the LSE energy gradient is exactly zero at the memory.
\end{remark}

The striking phenomenon that we observe with LSR energy is that the DenseAM can simultaneously create local energy minima around the original memories as well as additional local minima around points that are not part of the set of original memories; see \cref{fig:fig1}. We formalize this concept below through the notion of  {\em global emergence}.
\begin{definition}[Novel local minima] \label{def:novel-lmin}
Consider a DenseAM parameterized with $M$ memories $\bXi = \{ \bxi_1, \ldots, \bxi_M \}, \bxi_\mu \in \mathcal{X}$, and equipped with an energy function $E_\beta(\bx; \bXi)$ at any state $\bx \in \mathcal{X}$ for a specific inverse temperature $\beta > 0$. For some $\varepsilon > 0$, we define $\mathcal{M}_\varepsilon$ as the (possibly empty) set of {\bf novel local minima} $\tilde{\bxi} \in \mathcal{X}$ such that $\forall \tilde{\bxi} \in \mathcal{M}_\varepsilon$, 
\begin{enumerate}
\item[(a)] $\tilde{\bxi}$ is a local energy minimum with $\nabla E_\beta(\tilde{\bxi}; \bXi) = 0$ and $\nabla^2 E_\beta(\tilde{\bxi}; \bXi) \succ 0$,
\item[(b)] $\tilde{\bxi}$ is novel with respect to the original memories, that is, $\min_{\mu \in \iset{M}} \norm{\tilde{\bxi} - \bxi_\mu } \geq \varepsilon$.
\end{enumerate}
\end{definition}
\begin{definition}[Global emergence] \label{def:global-emg}
For the DenseAM in \Cref{def:novel-lmin} and for some $\varepsilon > 0$, let $\mathcal{M}_\varepsilon$ be the (possibly empty) set of novel local minima. For some $\beta \subset (0,\infty)$, we claim that this system exhibits {\bf $\varepsilon$-global emergence} if
(i)~each original memory $\bxi_\mu, \mu \in \iset{M}$ is a local energy minimum with $\nabla E_\beta(\bxi_\mu; \bXi) = 0$ and $\nabla^2 E_\beta(\bxi_\mu; \bXi) \succ 0$ (positive definite), and
(ii)~the set $\mathcal{M}_\varepsilon$ is non-empty.
We term $\mathcal{M}_\varepsilon$ as the set of {\bf $\varepsilon$-globally emergent memories}.
\end{definition}

The notion of $\varepsilon$-global emergence specifically refers to those new patterns that arise \emph{after} all original memories have been exactly stored. \Cref{def:global-emg} characterizes emergence as a property of the global energy function at a specific inverse temperature, requiring simultaneous exact recovery
of all original memories and the presence of at least one novel local minimum (parameterized with $\varepsilon$). It is instructive to start by understanding the above definition for DenseAMs equipped with LSE energy. According to \citet{ramsauer2021hopfield}, any point $\bx^*$ such that $\nabla \Elsemem_\beta(\bx^*; \bXi)=0$ is defined via the softmax corresponding to the transformer attention as follows: $\bx^*=\sum_{\mu=1}^M \text{softmax}(\beta(\bx^*)^{\top}\bxi_\mu)\bxi_\mu$, and the softmax can be highly peaked if all $\{\bxi_\mu\}_{\mu=1}^M$ are well separated and $\bx^*$ is near a stored pattern $\bxi_\mu$. If no stored pattern $\bxi_\mu$ is well separated from the others, then $\bx^*$ is close to a global fixed point, which is the arithmetic mean of all the stored patterns. Based on this, we can make the following observations:

\begin{itemize}
\item Case I: \textit{All LSE memories are novel}. With a large enough but finite $\beta$, there is a minimum close to each of the original memories. However, each of these local minima will be considered novel local minima as these are distinct from the original memories, thus condition (ii) in \Cref{def:global-emg} will be satisfied. However, then the condition (i) in \Cref{def:global-emg} would not be satisfied.
\item Case II: \textit{No LSE memories are novel}. If we do consider the case $\beta = \infty$, then the original memories would exactly be the local energy minima, and condition (i) will be satisfied. But then the set of novel local minima $\mathcal{M}_\varepsilon$ for a strictly positive $\varepsilon$ would be empty, violating condition (ii).
\item Case III: \textit{Novel LSE memories form only when basins merge}. For a moderate $\beta$, LSE can form novel local minima by merging the basins of attractions of the original memories, thereby giving us a non-empty $\mathcal{M}_\varepsilon$ and satisfying condition (ii) in \Cref{def:global-emg}. However, condition (i) will be violated as the memories whose basins are merged would no longer be local minima.
\end{itemize}

Thus, we can make this more formal in the following:

\begin{proposition}\label{prop:LSE_no_epsilon_emg}
    Assume $\{\bxi_\mu\}_{\mu=1}^M$ are i.i.d. from any density fully supported on $\mathcal{X}$. Note that they are linearly independent with probability 1, as otherwise they lie in a lower dimensional space. Then, for any $\beta>0$, the LSE energy, $\Elsemem_\beta(\cdot)$, does not satisfy the $\varepsilon$-global emergence in \Cref{def:global-emg}.
\end{proposition}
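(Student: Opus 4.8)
The plan is to defeat condition~(i) of \Cref{def:global-emg} --- that every stored memory $\bxi_\mu$ be a critical point of $\Elsemem_\beta$ --- for every finite $\beta>0$ (the only range admitted by the definition), which immediately kills $\varepsilon$-global emergence regardless of what happens to condition~(ii). The first step is to write the LSE gradient in closed form: setting $a_\mu(\bx)=\exp(-\tfrac{\beta}{2}\norm{\bx-\bxi_\mu}^2)$ and $p_\mu(\bx)=a_\mu(\bx)/\sum_\nu a_\nu(\bx)$, a one-line differentiation gives $\nabla_\bx\Elsemem_\beta(\bx;\bXi)=\bx-\sum_{\mu=1}^M p_\mu(\bx)\bxi_\mu$, i.e.\ $\bx$ is a critical point iff it equals the softmax-weighted average of the memories --- the fixed-point relation recalled in the surrounding text.

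The second step evaluates this at $\bx=\bxi_\mu$. All weights $p_\nu(\bxi_\mu)$ are strictly positive, and for $M\ge 2$ and finite $\beta$ the self-weight $p_\mu(\bxi_\mu)=\bigl(1+\sum_{\nu\ne\mu}e^{-\frac{\beta}{2}\norm{\bxi_\mu-\bxi_\nu}^2}\bigr)^{-1}$ is strictly less than $1$. Hence $\nabla\Elsemem_\beta(\bxi_\mu;\bXi)=0$ would force
\[(1-p_\mu(\bxi_\mu))\,\bxi_\mu=\sum_{\nu\ne\mu}p_\nu(\bxi_\mu)\,\bxi_\nu,\]
a relation $\sum_\nu c_\nu\bxi_\nu=0$ with $c_\mu=1-p_\mu(\bxi_\mu)\ne 0$ and $c_\nu=-p_\nu(\bxi_\mu)\ne 0$ for $\nu\ne\mu$, i.e.\ a nontrivial linear dependence among $\{\bxi_1,\dots,\bxi_M\}$ --- contradicting the almost-sure linear independence of i.i.d.\ draws from a density on $\mathcal{X}$. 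So no $\bxi_\mu$ is a critical point, condition~(i) fails, and we are done. The degenerate case $M=1$ is handled separately: there $\Elsemem_\beta(\bx)=\tfrac12\norm{\bx-\bxi_1}^2$ has $\bxi_1$ as its unique critical point, so $\mathcal{M}_\varepsilon=\emptyset$ and it is condition~(ii) that fails.

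The step needing the most care is the bookkeeping around what ``$\beta>0$'' excludes: the whole argument leans on $p_\mu(\bxi_\mu)<1$, which breaks at $\beta=\infty$ (there the self-weight is exactly $1$, $\bxi_\mu$ is a critical point, and one falls into Case~II of the discussion above). Since \Cref{def:global-emg} only allows $\beta\in(0,\infty)$ this causes no problem, but it should be stated explicitly. A secondary subtlety is that ``linearly independent'' is only literally true for $M\le d$; if one wants the statement to cover $M>d$ as well, I would instead invoke the almost-sure fact that the memories are in general position and take $\bxi_\mu$ to be a vertex of $\mathrm{conv}(\bXi)$. Then there is a direction $\vu$ with $\ip{\vu}{\bxi_\mu}>\ip{\vu}{\bxi_\nu}$ for all $\nu\ne\mu$, so $\ip{\vu}{\nabla\Elsemem_\beta(\bxi_\mu;\bXi)}=\sum_{\nu\ne\mu}p_\nu(\bxi_\mu)\ip{\vu}{\bxi_\mu-\bxi_\nu}>0$ and $\bxi_\mu$ is again not a critical point --- closing the argument without any dimension restriction.
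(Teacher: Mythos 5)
Your proof is correct and follows essentially the same route as the paper's: write the LSE gradient in the softmax fixed-point form $\nabla\Elsemem_\beta(\bx)=\bx-\sum_\mu p_\mu(\bx)\bxi_\mu$, evaluate it at a stored pattern, and derive a nontrivial linear dependence that contradicts the almost-sure linear independence of the memories. Your extra care --- the explicit $M=1$ case and the general-position/convex-hull argument for $M>d$, where literal linear independence cannot hold --- actually patches a gap in the hypothesis that the paper's own proof glosses over.
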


One can argue that global emergence as in \Cref{def:global-emg} is too restrictive; we also want to characterize an individual local minimum as ``emergent'', or not. Thus we present a relaxed \textit{local} notion of emergence in the following, noting that LSE does not satisfy this weaker form of emergence either:

\begin{definition}[Locally emergent memory] \label{def:local-emg-mem}
Consider the DenseAM in \Cref{def:novel-lmin} with a non-empty set of novel local minima $\mathcal{M}_\varepsilon$ for a $\varepsilon > 0$. For any $\tilde{\bxi} \in \mathcal{M}_\varepsilon$, let $\mathcal{S} ( \tilde{\bxi} ) \subseteq \bXi$ be the minimal  non-empty subset of $\bXi$ such that, for each $\bxi_\mu \in \mathcal{S}(\tilde{\bxi})$, $\tilde{\bxi}$ is no longer a local minimum of the energy $E_\beta(\cdot, \bXi \setminus \{ \bxi_\mu \})$ that excludes the memory. Then we define $\tilde{\bxi} \in \mathcal{M}_\varepsilon$ as a {\bf $\varepsilon$-locally emergent memory} if there is some original memory $\bxi_\mu \in \mathcal{S}(\tilde{\bxi})$ which still is a local minimum of the energy $E_\beta(\cdot; \bXi)$. If every original memory $\bxi_\mu \in \mathcal{S}(\tilde{\bxi})$ is a local minimum of $E_\beta(\cdot; \bXi)$, we call $\tilde{\bxi} \in \mathcal{M}_\varepsilon$ a {\bf $\varepsilon$-local strongly emergent memory}.
\end{definition}

While \Cref{def:global-emg} discusses emergence at a global energy level, \Cref{def:local-emg-mem} characterizes emergence locally for each of the novel local minima. This distinction is important as we see emergence as a general property of the energy function that can be driven by a subset of the memories. \Cref{def:global-emg} requires all original memories to be retrievable, while in \Cref{def:local-emg-mem} we allow for emergence due to the interaction of stored patterns in a system even if not all original memories are retrievable, so long as a critical subset of the original memories are. Global emergence implies the existence of at least one local strongly emergent memory; every globally emergent memory is a local strongly emergent one. However, the existence of a local strongly emergent memory does not imply global emergence. As with global emergence, we can see that a DenseAM with LSE energy does not also have locally emergent memories. First note that, for a finite $\beta$, all local minima are novel local minima, and the minimal set $\mathcal{S}(\tilde{\bxi})$ for a novel local minimum $\tilde{\bxi}$ is the whole set $\bXi$ given the infinite support of the exponential function, with none of them being a local minimum. For $\beta = \infty$, the set of novel local minima $\mathcal{M}_\varepsilon$ is empty. So in both cases, the required conditions are not satisfied and there are no locally emergent memories with LSE energy.

Note that these novel local minima are different from the well-studied spurious memories or parasitic memories~\citep{amari1988statistical,robins2004robust}. In classical AMs, spurious memories start appearing when the AM is packed with memories beyond its memory capacity. In contrast, the appearance of emergence (novel local minima) does not seem to be related to whether the DenseAM is over or under capacity --- as we show in \cref{fig:fig1}, a locally emergent memory can appear even with just 2 stored patterns of any dimension. Spin-glass states~\cite{barra2018phase} do not occur in either the LSE or LSR energy due to our use of Euclidean similarity over the dot product in both energies.

The next result provides explicit characterization of the form of novel memories in LSR energy.
\begin{proposition}\label{prop:spurious}
Consider the LSR energy in \cref{eq:lsr-energy}. For any $\bx \in \text{interior}(\gX)$, letting $B(\bx) \triangleq \{ \mu \in \iset{M}: \norm{\bx - \bxi_\mu} \leq \sqrt{2/\beta} \}$, there is a local minima of the LSR energy which is given by $\frac{1}{|B(\bx)|} \sum_{\mu \in B(\bx)} \bxi_\mu$.
\end{proposition}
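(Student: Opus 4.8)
The plan is to exploit the piecewise-smooth structure of $\Eepamem_\beta$: its only nonsmoothness sits on the spheres $\{\bz:\norm{\bz-\bxi_\mu}^2=2/\beta\}$, and on each open region determined by which memories are ``active'' the energy collapses to $-\tfrac1\beta\log$ of an affine function of $\norm{\bz}^2$, whose unique critical point is the centroid of the active memories.

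First I would fix $\bx\in\mathrm{int}(\gX)$ with $B:=B(\bx)\neq\emptyset$, set $\bar{\bxi}:=\tfrac1{|B|}\sum_{\mu\in B}\bxi_\mu$, and introduce the open cell
\[
U_B:=\bigl\{\bz\in\gX:\ \norm{\bz-\bxi_\mu}^2<2/\beta\ \text{for all }\mu\in B,\ \norm{\bz-\bxi_\mu}^2>2/\beta\ \text{for all }\mu\notin B\bigr\}.
\]
On $U_B$, every $\relu{1-\tfrac\beta2\norm{\bz-\bxi_\mu}^2}$ with $\mu\notin B$ vanishes and every one with $\mu\in B$ is strictly positive, so $\Eepamem_\beta(\bz;\bXi)=-\tfrac1\beta\log(\epsilon+G(\bz))$ where $G(\bz):=|B|-\tfrac\beta2\sum_{\mu\in B}\norm{\bz-\bxi_\mu}^2>0$; in particular $E$ is $C^\infty$ on $U_B$. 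A short computation gives $\nabla G(\bz)=-\beta|B|(\bz-\bar{\bxi})$, hence $\nabla\Eepamem_\beta(\bz;\bXi)=|B|(\bz-\bar{\bxi})/(\epsilon+G(\bz))$, so $\bar{\bxi}$ is the unique stationary point of $E$ in $U_B$; differentiating once more, the term carrying the factor $(\bz-\bar{\bxi})$ drops at $\bz=\bar{\bxi}$, leaving $\nabla^2\Eepamem_\beta(\bar{\bxi};\bXi)=\tfrac{|B|}{\epsilon+G(\bar{\bxi})}\,\bI\succ 0$. Thus $\bar{\bxi}$ is a nondegenerate critical point of $E|_{U_B}$ with positive-definite Hessian, hence a strict local minimum of $\Eepamem_\beta$ on the open neighborhood $U_B$, which is exactly condition (a) of \Cref{def:novel-lmin}. (When $\epsilon>0$, $E$ is finite and continuous everywhere; when $\epsilon=0$, points outside $\bigcup_\mu\mathsf{S}_\mu$ carry strictly larger, possibly infinite energy and cannot interfere; and $\bar{\bxi}$ is interior as a convex combination of the stored patterns when $\gX$ is convex.)

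The hard part — the one step on which everything hinges — is verifying that $\bar{\bxi}$ actually lies in $U_B$, i.e. that the active set at the centroid is again exactly $B$: no memory of $B$ may have left its ball of radius $\sqrt{2/\beta}$ about $\bar{\bxi}$, and no memory outside $B$ may have entered it. I would handle this self-consistency in one of two ways. (i) A fixed-point/mean-shift argument: the map $\bz\mapsto\tfrac{1}{|B(\bz)|}\sum_{\mu\in B(\bz)}\bxi_\mu$ is a discretization of $-\nabla\Eepamem_\beta$, its iterates from $\bx$ monotonically decrease $E$ and converge to some $\bz^\star$ at which $B(\bz^\star)$ is stable and $\bz^\star$ equals the centroid of $B(\bz^\star)$, where the computation of the previous paragraph applies verbatim; so every such trajectory ends at a local minimum of the claimed form. (ii) Restrict to the separated regime of \Cref{thm:retrieval}, where memories that are mutually active are close enough that their centroid stays inside all of their balls and outside every other one, so $B(\bx)$ is self-consistent outright. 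I expect essentially all the technical work — and any extra constraint on $\beta$ or on the choice of $\bx$ — to live in this self-consistency step; the gradient/Hessian computation itself is short and robust.
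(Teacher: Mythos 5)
Your gradient--Hessian computation is, word for word, the paper's entire proof: the paper writes down $\nabla\Eepamem_\beta(\bx)=\sum_{\mu\in B(\bx)}(\bx-\bxi_\mu)\big/\big(\epsilon+\sum_{\nu}\relu{1-\tfrac{\beta}{2}\norm{\bx-\bxi_\nu}^2}\big)$, observes that this vanishes when $\bx$ equals the centroid of $B(\bx)$, records the Hessian $\tfrac{|B(\bx^*)|}{\epsilon+\sum_\nu\relu{\cdot}}\,\bI\succ 0$, and stops. The self-consistency step that you isolate as ``the one step on which everything hinges'' is not addressed in the paper at all: it silently identifies the active set at the centroid with $B(\bx)$. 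Your worry is not pedantry. In $d=1$ with $\sqrt{2/\beta}=1$ and memories $\bxi_1=0$, $\bxi_2=0.2$, $\bxi_3=1.05$, the point $\bx=0$ has $B(\bx)=\{1,2\}$ and centroid $\bar{\bxi}=0.1$; but $\norm{0.1-1.05}=0.95\le 1$, so $\bxi_3$ enters the active set at $\bar{\bxi}$ and $\nabla\Eepamem_\beta(0.1)\propto 0.1+(-0.1)+(-0.95)\neq 0$ --- the centroid of $B(\bx)$ is not even a critical point, let alone a local minimum. So the proposition as literally stated requires the implicit hypothesis $B(\bar{\bxi})=B(\bx)$ (equivalently, that $\bar{\bxi}$ lies in your open cell $U_B$), and the paper's proof has exactly the gap you describe.

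Of your two proposed repairs, (i) is essentially the paper's own fixed-point iteration (\cref{alg:epa-fixed-point}), though its claimed monotone convergence would itself need an argument (the energy can jump when the active set changes), and (ii) corresponds to the margin conditions $\delta_{\min}(\bx^*)>0$ and $\gamma_{\min}(\bx^*)>0$ under which part (b) of \Cref{thm:main} is proved. Either would close the gap if carried out; neither is carried out in your proposal or in the paper. In short: the computation you give agrees with the paper's proof, and your diagnosis of the missing self-consistency step is correct --- you have identified, though not fully closed, a genuine gap that the published argument shares.
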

Note that when $|B(\bx)|=1$, the local minima in \Cref{prop:spurious} is exactly the stored memory $\{\bxi_\mu\}_{\mu=1}^M$. With $|B(\bx)| > 1$, it is not equal to any of the original memories $\{ \bxi_\mu, \mu \in \iset{M} \}$ (with probability 1). The region $\{ \bx \in \gX : | B(\bx) | > 1 \} \subset \gX$ is precisely characterized as $\left(\cup_{\mu \in \iset{M}} \mathsf{S}_\mu\right) \setminus \left( \cup_{\mu \in \iset{M}} S_\mu(\Delta) \right) $ where $\mathsf{S}_\mu$  is the region of finite energy around the $\mu^{\textsf{th}}$ memory and $S_\mu(\Delta)$ (defined in \Cref{thm:retrieval}) is the distinct attracting basin for the $\mu^{\textsf{th}}$ memory. The following theorem shows that this LSR based DenseAM is capable of simultaneously retrieving all (up to exponentially many) memories while also creating many novel local minima, and quantifies this phenomenon precisely.  

\begin{theorem}\label{thm:main}
Consider a DenseAM parameterized with $M$ memories $\bXi = \{\bxi_1, \ldots, \bxi_M\}$ sampled uniformly from $\mathcal{X}$ with $\text{vol}(\mathcal{X})=V<\infty$ and the LSR energy $\Eepamem_\beta(\cdot; \bXi)$ defined in \cref{eq:lsr-energy}.
For each novel local minimum $\bx^*=\frac{1}{|B(\bx^*)|} \sum_{\mu \in B(\bx^*)} \bxi_\mu$ given in \Cref{prop:spurious}, define $D_{\max}(\bx^*) := \max_{\mu} \|\bx^* - \bxi_\mu\|$,
\begin{align*}
\delta_{\min}(\bx^*) &:= \min_{\mu \in B(\bx^*)} \left( \frac{2}{\beta} - \|\bx^* - \bxi_\mu\|^2 \right),\ \gamma_{\min}(\bx^*) \:= \min_{\nu \notin B(\bx^*)} \left( \|\bx^* - \bxi_\nu\|^2 - \frac{2}{\beta} \right).
\end{align*}
Then, for all $\beta>0$ such that $\max_{\bx^*}\delta_{\min}(\bx^*)>0$ and $\min_{\bx^*}\gamma_{\min}(\bx^*)>0$, there exists an $\varepsilon:=\min_{\bx^*}\left(\sqrt{D_{\max}^2(\bx^*) + \min\{\delta_{\min}(\bx^*), \gamma_{\min}(\bx^*)\}} - D_{\max}(\bx^*)\right)>0$ such that $\Eepamem_\beta(\cdot)$ satisfies the $\varepsilon$-global emergence condition (\Cref{def:global-emg}) with high probability, as:
\begin{itemize}
    \item [(a)]  With probability at least $\delta \in (0, 1)$, and $M = \Theta\left(\sqrt{1-\delta}\exp(\alpha d)\right)$ for a positive $\alpha$, all memories are retrievable as per \Cref{thm:retrieval} with the value of the minimum pairwise distance $r = \min_{\mu, \mu\in \llbracket M \rrbracket, \mu \not= \nu } \| \bxi_\mu - \bxi_\nu \| \geq (V_d/V)^{-1/d}e^{-2\alpha}$ and per-memory basin radius $\Delta \in (0, (V_d/V)^{-1/d}e^{-2\alpha})$ with a $\beta \leq 2 / ((V_d/V)^{-1/d}e^{-2\alpha}-\Delta)^2$, where $V_d$ is the volume of the unit ball in $\mathbb{R}^d$.
    \item [(b)] For each novel local minimum $\bx^*$, there exists a radius
\begin{align} \label{eq:gem-radius}
r^* := \sqrt{D_{\max}^2(\bx^*) + \min\{\delta_{\min}(\bx^*), \gamma_{\min}(\bx^*)\}} - D_{\max}(\bx^*)>0,
\end{align}
such that $S_{\bx^*}(r^*)=\{\bx\in\mathcal{X}:\|\bx-\bx^*\|\le r^*\}$ forms a basin around the novel memory $\bx^*$, and for any $\bx\in S_{\bx^*}(r^*)$, the output of the DenseAM via energy gradient descent is exactly $\bx^*$, implying the novel memories are retrievable.
\end{itemize}
Furthermore, with probability at least $1-M^{-2}$, 
    the number of $\varepsilon$-globally emergent memories is
\begin{align} \label{eq:num-gem}
 O\left(\exp\left(M\frac{V_d}{V}\left(\frac{2}{\beta}\right)^{\frac{d}{2}} \log\left(\frac{eV}{V_d}\left(\frac{\beta}{2}\right)^\frac{d}{2}\right)\right)\right).
\end{align} 
In particular, for fixed $\beta>0$ and $d$, the bound grows with $M$ whenever $\beta>0$ satisfies $|B(\bx)|\in (1,M]$ due to \Cref{prop:spurious}. 
Here, for any $\bx \in \mathcal{X}$, a large $\beta$ leads to a small $| B(\bx) |$, and $|B(\bx)| > 1$ is the required condition for a novel local minima;
a small $\beta$ leads to a large $|B(\bx)|$, and $|B(\bx)|\le M$ stands for the case $B(\bx)$ at most covers the entire domain $\mathcal{X}$. 

\end{theorem}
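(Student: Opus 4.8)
I would split the statement into three essentially independent pieces and combine them at the end: (i) that every original memory $\bxi_\mu$ is a strict local minimum of $\Eepamem_\beta$ under the stated $\beta$ --- this is condition~(i) of \Cref{def:global-emg} together with part~(a); (ii) the existence of the basins $S_{\bx^*}(r^*)$ around the novel minima of \Cref{prop:spurious}, which is part~(b) and also exhibits a point of $\mathcal{M}_\varepsilon$, giving condition~(ii) of \Cref{def:global-emg}; and (iii) the counting bound \eqref{eq:num-gem}. Piece (i) reduces to \Cref{thm:retrieval} after a packing estimate; piece (ii) follows by localizing the LSR gradient so that the active kernel index set is frozen near $\bx^*$; piece (iii) is a concentration-plus-counting argument, where I expect the real difficulty to lie.

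\textbf{Step 1 (part (a) and strict minimality of the $\bxi_\mu$).}
For $M$ i.i.d.\ uniform points in $\gX$, a fixed pair lies within distance $\rho$ with probability at most $V_d\rho^d/V$, so a union bound over the $\binom M2$ pairs gives $\Pr[r<\rho]\le\tfrac12 M^2 V_d\rho^d/V$. Taking $\rho=(V/V_d)^{1/d}e^{-2\alpha}$ turns the right-hand side into $\tfrac12 M^2 e^{-2\alpha d}$, which is $\le 1-\delta$ precisely when $M=\Theta(\sqrt{1-\delta}\,e^{\alpha d})$; this gives the claimed lower bound on $r$ and hence the admissible range of $\beta$ in part~(a). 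On that event \Cref{thm:retrieval} applies and yields exact single-step retrieval of every $\bxi_\mu$. To strengthen this to ``strict minimum with positive-definite Hessian'' as required by \Cref{def:global-emg}(i), note that since $\beta\ge 2/(r-\Delta)^2>2/r^2$ no other memory lies within $\sqrt{2/\beta}$ of $\bxi_\mu$, so in a small neighbourhood of $\bxi_\mu$ only the $\mu$-th kernel is active and there $\Eepamem_\beta(\bx)=-\tfrac1\beta\log(\epsilon+1-\tfrac\beta2\norm{\bx-\bxi_\mu}^2)$, whose gradient vanishes at $\bxi_\mu$ (cf.\ \Cref{rem:exact-recovery}) and whose Hessian there equals $\tfrac1{1+\epsilon}I\succ0$.

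\textbf{Step 2 (part (b) and $\mathcal{M}_\varepsilon\neq\emptyset$).}
Fix a point with $|B(\bx)|>1$ and let $\bx^*=\tfrac1{|B(\bx)|}\sum_{\mu\in B(\bx)}\bxi_\mu$ be the corresponding novel minimum from \Cref{prop:spurious}. The crux is that \eqref{eq:gem-radius} is exactly tuned so that the active index set is constant, equal to $B(\bx^*)$, throughout $S_{\bx^*}(r^*)$: for $\bx'\in S_{\bx^*}(r^*)$ and $\mu\in B(\bx^*)$ the triangle inequality and $\norm{\bx^*-\bxi_\mu}^2\le 2/\beta-\delta_{\min}(\bx^*)$ give $\norm{\bx'-\bxi_\mu}^2\le(r^*+D_{\max}(\bx^*))^2-D_{\max}^2(\bx^*)+2/\beta-\delta_{\min}(\bx^*)\le 2/\beta$, while for $\nu\notin B(\bx^*)$, using $\norm{\bx^*-\bxi_\nu}^2\ge 2/\beta+\gamma_{\min}(\bx^*)$ and $D_{\max}(\bx^*)\ge\sqrt{2/\beta+\gamma_{\min}(\bx^*)}$, one gets $\norm{\bx'-\bxi_\nu}\ge\norm{\bx^*-\bxi_\nu}-r^*\ge\sqrt{2/\beta}$. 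Consequently, killing the cross term via $\sum_{\nu\in B(\bx^*)}(\bx^*-\bxi_\nu)=0$,
\begin{align*}
\Eepamem_\beta(\bx')=-\tfrac1\beta\log\!\Big(a-\tfrac{\beta|B(\bx^*)|}{2}\norm{\bx'-\bx^*}^2\Big),\quad a:=\epsilon+\!\!\sum_{\nu\in B(\bx^*)}\!\!\bigl(1-\tfrac\beta2\norm{\bx^*-\bxi_\nu}^2\bigr)>0,
\end{align*}
on $S_{\bx^*}(r^*)$, which is strictly increasing in $\norm{\bx'-\bx^*}$; hence $\bx^*$ is a strict minimum with $\nabla^2\Eepamem_\beta(\bx^*)=\tfrac{|B(\bx^*)|}{a}I\succ0$ and $\nabla\Eepamem_\beta(\bx')\propto(\bx'-\bx^*)$ with a uniformly bounded positive prefactor, so energy gradient descent with a small fixed step contracts toward $\bx^*$ and never leaves $S_{\bx^*}(r^*)$ --- this is part~(b). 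Set $\varepsilon:=\min_{\bx^*}r^*(\bx^*)$, which is positive by the hypotheses on $\delta_{\min},\gamma_{\min}$ and is attained over a non-empty set since $\max_{\bx^*}\delta_{\min}(\bx^*)>0$. Each such $\bx^*$ lies at distance $\ge\varepsilon$ from every $\bxi_\mu$: a memory inside $S_{\bx^*}(r^*)$ would be both a fixed point of the descent (Step 1) and attracted to $\bx^*$, which is impossible, while memories outside $B(\bx^*)$ are at distance $>\sqrt{2/\beta}\ge r^*\ge\varepsilon$. Hence $\mathcal{M}_\varepsilon\neq\emptyset$ and $\varepsilon$-global emergence holds on the event of Step 1.

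\textbf{Step 3 (the count), and the main obstacle.}
By \Cref{prop:spurious}, every $\varepsilon$-globally emergent memory is the centroid of some $B\subseteq\iset M$ with $2\le|B|\le N$, where $N$ is the largest number of memories $\bxi_\mu$ contained in a Euclidean ball of radius $\sqrt{2/\beta}$; hence their number is at most $\sum_{j=2}^N\binom Mj\le N(eM/N)^N$. To bound $N$, observe that for each $\mu$ the count $|\{\nu\neq\mu:\norm{\bxi_\mu-\bxi_\nu}\le 2\sqrt{2/\beta}\}|$ is stochastically dominated by $\mathrm{Binomial}(M-1,\,2^d\tfrac{V_d}{V}(2/\beta)^{d/2})$, so a multiplicative Chernoff bound and a union bound over $\mu\in\iset M$ give, with probability at least $1-M^{-2}$, that $N=O(M\tfrac{V_d}{V}(2/\beta)^{d/2})$ (in the regime where this quantity dominates $\log M$); substituting into $N(eM/N)^N=\exp(N\log(eM/N)+\log N)$ and using $\log(eM/N)=\log(\tfrac{eV}{V_d}(\beta/2)^{d/2})+O(1)$ reproduces \eqref{eq:num-gem}, and intersecting with the event of Step 1 costs a further $O(M^{-2})$. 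The stated reading of the admissible $\beta$ --- $|B(\bx)|\in(1,M]$, with large $\beta$ pushing $|B(\bx)|$ below $2$ and small $\beta$ making $B(\bx)=\iset M$ --- is just monotonicity of $|B(\bx)|$ in $\beta$. Steps 1 and 2 are essentially bookkeeping (the formula \eqref{eq:gem-radius} is reverse-engineered from the triangle inequalities of Step 2); the genuine difficulty is Step 3: obtaining a high-probability bound on the maximal ball occupancy $N$ sharp enough to recover the exact exponent $M\tfrac{V_d}{V}(2/\beta)^{d/2}\log(\tfrac{eV}{V_d}(\beta/2)^{d/2})$ in \eqref{eq:num-gem} --- in particular absorbing the $2^d$ and $\log M$ slack and the boundary effects from balls meeting $\partial\gX$ --- and checking that the number of realizable centroid-subsets is no larger than the naive $\binom{M}{\le N}$ count.
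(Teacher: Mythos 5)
Your proposal is correct and, for parts (a) and (b), follows essentially the same route as the paper: the same pairwise-collision union bound with $\rho=(V/V_d)^{1/d}e^{-2\alpha}$ for (a), and the same ``freeze the active set $B(\bx)=B(\bx^*)$ on $S_{\bx^*}(r^*)$'' argument for (b), where your explicit expansion $\Eepamem_\beta(\bx')=-\tfrac1\beta\log\bigl(a-\tfrac{\beta|B(\bx^*)|}{2}\norm{\bx'-\bx^*}^2\bigr)$ is a cleaner way of packaging the paper's Cauchy--Schwarz bound $|\norm{\bx-\bxi_\mu}^2-\norm{\bx^*-\bxi_\mu}^2|\le 2D_{\max}\Delta+\Delta^2$. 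You also supply two checks the paper leaves implicit: the positive-definite Hessian at each $\bxi_\mu$ (needed for \Cref{def:global-emg}(i)) and the verification that every novel minimum lies at distance at least $\varepsilon$ from every stored pattern (needed for membership in $\mathcal{M}_\varepsilon$). The one genuine divergence is the counting step. The paper fixes a single $\bx$, applies sub-Gaussian concentration to get $\lambda=|B(\bx)|=\Theta\bigl(MV_d(2/\beta)^{d/2}\bigr)$, and bounds the count by the single binomial coefficient $\binom{M}{\lambda}$; your version bounds the \emph{maximal} ball occupancy $N$ uniformly via a Chernoff-plus-union bound over the $M$ memories and sums $\sum_{j=2}^{N}\binom{M}{j}$. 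Your version is the more rigorous reading --- emergent memories correspond to subsets of varying sizes, and the bound must hold uniformly over candidate balls, neither of which the paper's single-$\lambda$, single-$\bx$ argument addresses (the paper's proof also only delivers probability $1-M^{-1}$ where the statement claims $1-M^{-2}$) --- at the cost of the $2^d$ covering factor inside the exponent that you correctly flag; recovering the exact constant $V_d/V\,(2/\beta)^{d/2}$ in \eqref{eq:num-gem} is indeed the one place where neither your sketch nor the paper's proof is fully tight.
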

The above result demonstrates that with the LSR energy, it is possible to exactly memorize all original patterns (with high probability) and still generate new patterns --- what we term emergent memories (\Cref{def:global-emg}). This behavior is surprising in the same way interpolating models in deep learning generalize unexpectedly well: both challenge the classical bias-variance intuition~\citep{belkin2019does,belkin2021fit}. While LSE-based models also produce novel memories, they typically do so at the expense of perfect recall of the original patterns. In \Cref{prop:LSE_no_epsilon_emg} we show that LSE based DenseAMs do not have the global emergence property. This distinction highlights a key contribution of our work: \emph{new memory creation need not come at the cost of perfect memorization.} 

Next, we provide an exact order (i.e., upper and lower bounds) of the number of emergent memories under a grid design assumption.
\begin{proposition}\label{prop:num_em_grid}
    If $\{\bxi_\mu\}_{\mu=1}^M$ form a grid over $\mathcal{X}$ of equal size with $\text{Vol}(\mathcal{X})=V<\infty$, the number of emergent memories is of order $\Theta\left( \left( M^{1/d} - \lambda^{1/d} +1\right)^d\right)$, where $\lambda = \Theta\left(MV^{-1}\left({8}/{\beta}\right)^{\frac{d}{2}}\right)$ and for $\beta>0$ such that $1<\lambda\le M$. %
\end{proposition}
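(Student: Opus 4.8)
The plan is to convert the statement into a lattice‑combinatorics count and then prove matching upper and lower bounds. First I would normalize the setup: after an affine change of coordinates (which alters the count only by a $d$‑dependent constant) take $\mathcal X=[0,V^{1/d}]^d$ and $\bXi$ to be the regular grid with $m:=M^{1/d}$ points per axis (assumed integral, so $M=m^d$) and spacing $h:=(V/M)^{1/d}$, and set $R:=\sqrt{2/\beta}$ and $B(\bx):=\{\mu:\norm{\bx-\bxi_\mu}\le R\}$. At any smooth strict local minimum $\bx^*$ of $\Eepamem_\beta$, vanishing of the gradient of \cref{eq:lsr-energy} on the cell where $B(\cdot)\equiv B(\bx^*)$ forces $\bx^*=\frac{1}{|B(\bx^*)|}\sum_{\mu\in B(\bx^*)}\bxi_\mu$, and on that cell $\exp[-\beta\Eepamem_\beta(\cdot)]$ is a strictly concave quadratic maximised at this centroid, so $\nabla^2\Eepamem_\beta(\bx^*)\succ0$; such a point is a novel local minimum (\Cref{def:novel-lmin}, cf.\ \Cref{prop:spurious}) exactly when $|B(\bx^*)|>1$, since $|B(\bx^*)|=1$ returns a stored pattern. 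Hence it suffices to enumerate the distinct \emph{self‑consistent clusters}: subsets $S\subseteq\bXi$ with $|S|>1$, centroid $\bar\bxi_S:=\frac1{|S|}\sum_{\mu\in S}\bxi_\mu\notin\bXi$, and $S=\bXi\cap\bar B(\bar\bxi_S,R)$. Every point of such an $S$ is within $R$ of $\bar\bxi_S$, so $S$ sits inside a grid block with at most $k:=\lceil 2R/h\rceil+1$ points per axis; since $MV^{-1}(8/\beta)^{d/2}=(2R/h)^d$ (count of grid points in a box of side $2R$), we have $\lambda=\Theta((2R/h)^d)$ and $k=\Theta(\lambda^{1/d})$, and the hypothesis $1<\lambda\le M$ is equivalent to $R\ge h/2$ (some ball contains two memories) together with $m-k+1\ge1$.

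For the lower bound I would exhibit $\Omega\big((m-\lambda^{1/d}+1)^d\big)$ self‑consistent clusters. For any half‑lattice point $c=\tfrac h2\mathbf j$ with $\mathbf j\in\mathbb Z^d$ having some odd coordinate and $\mathrm{dist}(c,\partial\mathcal X)\ge R$, the point reflection $\bxi\mapsto 2c-\bxi$ preserves both $\bXi$ and the ball $\bar B(c,R)\subseteq\mathcal X$, so $B(c)$ is centrally symmetric about $c$ and thus $\bar\bxi_{B(c)}=c$; because $R\ge h/2$ we get $|B(c)|\ge2$, and $c\notin\bXi$ by the odd coordinate, so $c$ is a novel local minimum with novelty margin $\ge h/2$ (comfortably above the $\varepsilon$ furnished by \Cref{thm:main}). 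Distinct $c$ give distinct minima, and the number of admissible $c$ is $\Theta\big((m-\Theta(R/h))^d\big)=\Omega\big((M^{1/d}-\lambda^{1/d}+1)^d\big)$, the half‑lattice refinement contributing only a constant factor. In the complementary regime where $2R$ is a constant fraction of $V^{1/d}$ and no interior $c$ exists, I would instead slide $\bx^*$ along a ray from the center of $\mathcal X$ toward a corner: each position deletes a different "corner shell'' from $B(\bx^*)$, and counting the realizable, self‑consistent shells again gives $\Omega\big((m-\lambda^{1/d}+1)^d\big)$ distinct clusters.

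For the upper bound I would argue that self‑consistency pins each cluster down tightly: $\bar B(\bar\bxi_S,R)$ must contain all of $S$ but none of the grid points immediately beyond it, which forces each axis‑extent of $S$ into $\{k-1,k\}$ up to $O_d(1)$ exceptions from boundary truncation, so each of the $O_d(1)$ resulting shape classes admits at most $(m-k+2)^d=O\big((m-\lambda^{1/d}+1)^d\big)$ placements of its bounding block; boundary‑truncated clusters get a separate bound, namely when $\bar\bxi_S$ lies within $R$ of $\partial\mathcal X$ the cluster is determined by the grid points it omits on the far side, a region again described by $O\big((m-\lambda^{1/d}+1)^d\big)$ patterns, while in the large‑$\lambda$ regime most $\bx$ fail $B(\bar\bxi_{B(\bx)})=B(\bx)$ and are discarded. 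Summing the $O_d(1)$ shape classes then yields the matching $O\big((M^{1/d}-\lambda^{1/d}+1)^d\big)$, giving the $\Theta$ claim, and $\lambda=\Theta\big(MV^{-1}(8/\beta)^{d/2}\big)$ is immediate from the box count above. The hard part will be the upper bound's control of the boundary‑truncated clusters together with the exact floor/ceiling bookkeeping needed to collapse "block side $\approx 2R/h$'' into the single clean expression $(M^{1/d}-\lambda^{1/d}+1)^d$ uniformly over all $\beta$ with $1<\lambda\le M$ — in particular matching the lower‑order behaviour in the boundary‑dominated regime where $\lambda^{1/d}$ approaches $M^{1/d}$ and only $\Theta(1)$ novel minima survive.
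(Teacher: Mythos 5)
Your proposal follows the same counting skeleton as the paper's proof --- identify each emergent memory with a distinct placement of the radius-$\sqrt{2/\beta}$ ball on the grid, note that such a ball captures $\lambda=\Theta\bigl(MV^{-1}(8/\beta)^{d/2}\bigr)$ points, and count $\Theta(M^{1/d}-\lambda^{1/d}+1)$ admissible positions per axis --- but you execute it with considerably more care. Two things you add are genuinely needed and absent from the paper's argument: (i) the \emph{self-consistency} requirement $S=\bXi\cap\bar B(\bar\bxi_S,\sqrt{2/\beta})$, without which the centroid of an arbitrary $B(\bx)$ need not be a critical point with that same active set (the paper, like its \Cref{prop:spurious}, elides this); and (ii) an actual lower-bound construction, via point reflection through half-lattice centers, showing these placements really do produce distinct novel minima --- the paper only counts placements and asserts the count is the answer. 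Your observation that self-consistency eliminates all but $O_d(1)$ boundary-truncated clusters is also correct and is not discussed in the paper. The one soft spot is your upper bound: fixing the bounding block of $S$ does not obviously pin $S$ down to $O_d(1)$ shapes, since for $d\ge 2$ the digital ball $\bXi\cap\bar B(y,R)$ can change (gaining a point on one face of the sphere while losing one on another) as $y$ moves within a single cell of side $h$, and the number of such variants grows with $\lambda$; you would need to show that the fixed-point condition $\bar\bxi_S=y$ kills all but boundedly many of these per block. I would not call this a disqualifying gap relative to the target, however, because the paper's own proof makes the identical leap in cruder form --- it simply replaces the ball by a hypercube ``up to a constant depending on $d$'' and equates clusters with axis-aligned hypercube placements, which assumes exactly the per-block $O_d(1)$ multiplicity you leave unproved. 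Net: same route as the paper, strictly more rigorous on the lower bound and on self-consistency, and sharing the paper's one unresolved step on the upper bound.
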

Note that we showed an explicit form of the emergent memories $\bx^*=\frac{1}{B(\bx^*)}\sum_{\mu\in B(\bx^*)}\bxi_\mu$,
where $B(\bx^*)=\left\{\mu:\|\bx^*-\bxi_\mu\|<\sqrt{{2}/{\beta}}\right\}\subset \{1,\ldots,M\}$. \eqref{eq:num-gem} in \Cref{thm:main} is stated under the uniform sampling regime, and \Cref{prop:num_em_grid} is stated under a fixed grid setting. In general, the number of emergent memories varies according to the specific geometry of the stored patterns $\{\bxi_\mu\}_{\mu=1}^{M}$, i.e., whether such a subset of $\{1,\ldots,M\}$ can be realized by a ball $\left\{\|\bx-\bxi_\mu\|<\sqrt{{2}/{\beta}}\right\}$. This can grow much faster than a linear order of $M$, and is naively bounded by $2^M$. %
\section{Experiments}

\subsection{Quantifying the scaling of emergent memories}
\label{sec:LMIN0}

How many local minima do we see in practice as we: (a) vary the number of stored patterns, (b) change the dimensionality of those patterns, and (c) vary the inverse temperature $\beta$? We observe that, at critical values of $\beta$, we can create \textit{orders of magnitude} more emergent memories than stored patterns. These results are shown in \cref{fig:LMIN0-LOGL0} (left).

To quantify the number of local minima induced by the LSR energy, we uniformly sample $M$ patterns from the $d$-dimensional unit hypercube to serve as memories $\bXi$. 
We enumerate all possible local minima of the LSR energy by computing the centroid $\bar{\bxi}_\mathcal{K} := |\mathcal{K}|^{-1} \sum_{\mu \in \mathcal{K}} \bxi_\mu$ for every possible subset of stored patterns $\mathcal{K} \subseteq \iset{M}$ (there are $2^M$ possible subsets if we allow for singleton sets). For each subset, we first check that its centroid is supported (i.e., that $\Eepamem_\beta(\bar{\bxi}_\mathcal{K}; \bXi) < \infty$ at $\epsilon = 0$), and then declare that $\bar{\bxi}_\mathcal{K}$ is a local minimum of the LSR energy if $\norm{\nabla \Eepamem_\beta(\bar{\bxi}_\mathcal{K}; \bXi)} < \delta$ for small $\delta > 0$. 
$\beta$ values are varied across the ``interesting'' regime between fully overlapping support regions (a single local minimum in the unit hypercube) to fully disjoint support regions around each memory. 
See experimental details in \cref{sec:details-LMIN0}.

\begin{figure}[ht]
    \centering
    \includegraphics[width=\textwidth]{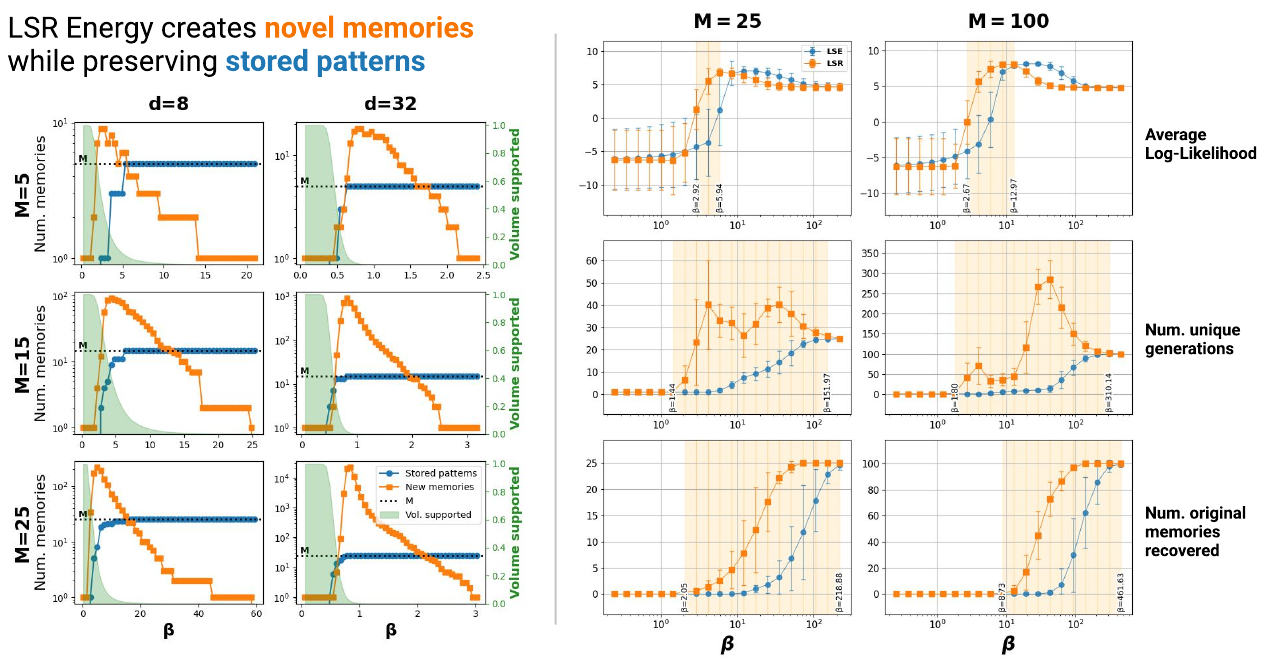}
     \caption{(Left) Analyzing local minima in LSR energy reveals a number of {\hexcolor{F87400}{\textbf{novel memories}}} several orders of magnitude larger than $M$, the number of stored patterns, at critical values of $\beta$ (note that the y-axes are logscale). 
     These emergent memories occur even while still preserving the {\hexcolor{1F77B4}{\textbf{stored patterns}}} as memories. Smaller values of $\beta$ have a larger {\hexcolor{238B23}{\textbf{region of support}}} on the unit hypercube. (Right) Given samples from some known true density function (in this case, a $k=10$ mixture of 8-dim Gaussians with means drawn uniformly from the unit hypercube and $\sigma=0.1$), memories from LSR energy have a log-likelihood comparable to, and occasionally slightly higher than, LSE under the true density function. Note that LSR achieves comparable log-likelihood while having more unique samples than LSE, even when both are seeded with the same $N=500$ queries. Regions of $\beta$ where LSR outperforms LSE on a metric are specified by the orange regions. Error bars indicate the standard error across 5 different seeds for sampling stored patterns and initial queries.}
    \label{fig:LMIN0-LOGL0}
\end{figure}

Certain values of $\beta$ yield particularly interesting behavior. For example, we observe that LSR can create orders of magnitude more emergent memories under ranges of $\beta$ where: (i) a majority (${>}60\%$) of stored patterns are also recoverable, and (ii) around 20 percent of the unit hypercube is still supported. Note that in each experiment there are choices of $\beta$ such that the LSR energy does not exhibit \textit{global emergence} (\Cref{def:global-emg}, i.e., at low $\beta$ where novel memories are forming but not all stored patterns are yet retrievable). However, in these regions \textit{local emergence} (\Cref{def:local-emg-mem}) of the novel memories still holds (see \cref{fig:fig1} for intuition).

\subsection{Generative quality of emergent memories}
\label{sec:logl0-main}

LSR memories are certainly more diverse than those of LSE, but do they represent more ``meaningful'' samples from a true, underlying density function $p(\bx)$ (as measured by their log-likelihood)? The experimental setup is as follows: 
Let $p(\bx)$ be a mixture of $k$ Gaussians whose means $\bmu_i \sim \mathcal{U}([0,1]^d)$ for $i \in \iset{k}$ are uniformly sampled from the $d$-dimensional unit hypercube with scalar ($\sigma = 0.1$) covariances such that $p(\bx) = \frac{1}{k}\sum_{i=1}^k \mathcal{N}(\bx \mid \bmu_i, \sigma^2 \bI_d)$. 
We sample $M$ points $\{\bxi_1, \ldots, \bxi_M\}, \bxi_\mu \sim p(\bx)$ to serve as the stored patterns $\bXi$ used to parameterize both the LSE and LSR energies from \cref{eq:lsr-energy}. 
Define a thin \textit{support boundary} induced by pattern $\bxi_\mu$ to be $\supp{\bxi_\mu; \delta} = \{ \bx : 2\beta^{-1} - \delta \leq \norm{\bx - \bxi_\mu}^2 < 2\beta^{-1} \}$ for some small $\delta > 0$. 
Then, for initial points $\bx^{(0)}_n$, $n \in \iset{N}$ sampled from the support boundary around each stored pattern,\footnote{We use the same initial points to seed the dynamics of both $\Eepamem$ and $\Elsemem$. See \cref{sec:details-LOGL0} for details.} LSE memories can be found using gradient descent
\begin{equation}
    \label{eq:discrete-memory-retrieval}
    \bx_n^{(t)} = \bx_n^{(t-1)} - \alpha \nabla \Elsemem_\beta(\bx_n^{(t-1)}; \bXi),
\end{equation}
until convergence to a \textit{memory} $\bx_n^\star$. We use \cref{alg:epa-fixed-point} to efficiently find the LSR memory corresponding to each initial point.
Thus we have $N$ ``samples'' (memories) from both LSE and LSR on which we compare three metrics of interest in \cref{fig:LMIN0-LOGL0} (right):

\begin{enumerate}
    \item \textbf{Average Log-Likelihood}. Do LSR memories $\bx^\star_\text{LSR}$ have higher $\log p(\bx^\star_\text{LSR})$ than LSE memories?
    \item \textbf{Number of Unique Samples}. Does high $\log p(\bx^\star_\text{LSR})$ occur alongside many emergent memories?
    \item \textbf{Number of Original Memories Recoverable}. Does high $\log p(\bx^\star_\text{LSR})$ occur when alongside high numbers of preserved memories? How does this trend compare with LSE memory performance?
\end{enumerate}

The results tell a consistent story. Despite LSE energy being a more natural choice to model the underlying Mixture of Gaussians' density $p(\bx)$ (LSE has a Gaussian kernel that makes it ideal for the modeling task), LSR can match LSE in log-likelihood while simultaneously generating more diverse samples and preserving the stored patterns. See \cref{sec:details-LOGL0} for more experimental results and extended discussion.

\subsection{Emergent memories in latent space}
\label{sec:QBVAE}

\begin{figure}[t]
    \centering
    \includegraphics[width=1.\textwidth]{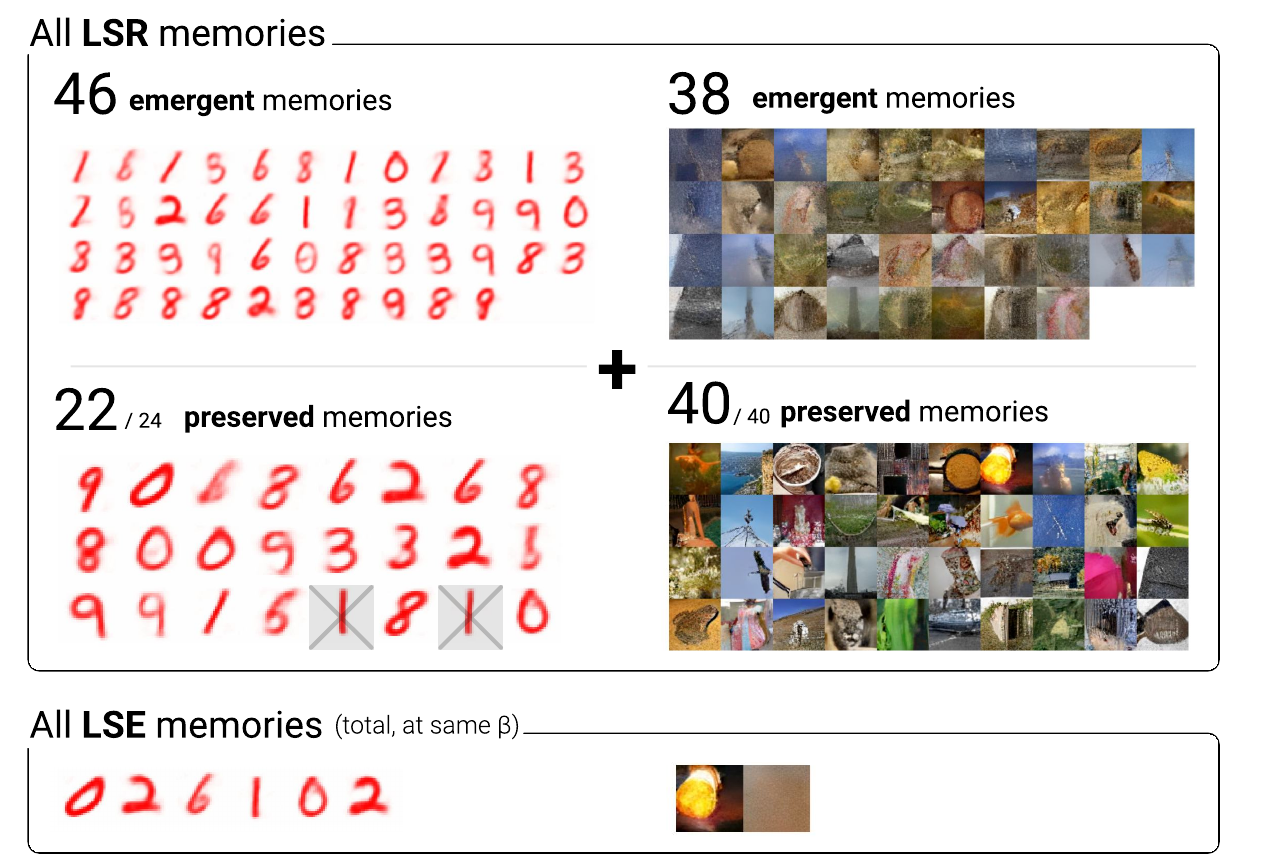}
    \caption{LSR's emergent memories appear as novel, creative generations when the energy is applied to a semantically meaningful latent space. \textbf{(Left)} 24 randomly-selected MNIST images are encoded into 10-dim VAE latents and stored into an LSR- and LSE-energy using a carefully chosen $\beta$ 
    (see \cref{alg:beta-search}). Gray boxes indicate which stored patterns were not preserved at the chosen $\beta$. \textbf{(Right)} 40 TinyImagenet~\citep{Le2015TinyIV} images are encoded into 256-dim latents using a pretrained VAE~\citep{madebyollin2023taesd} and stored into an LSR- and LSE-energy using a carefully chosen $\beta$. Note that in this TinyImagenet example the LSR energy is, by definition, \textit{globally emergent} since all stored patterns are recoverable, while the MNIST example is not. See experiment details in \cref{sec:details-qualitative-reconstructions}.
    }
    \label{fig:QBVAE_mnist_tinyimgnet}
\end{figure}

What do emergent memories look like when LSR is applied to real-world datasets? To study this behavior, we use a VAE to encode MNIST and TinyImagenet~\citep{Le2015TinyIV} images into latent vectors that serve as the stored patterns for LSR and LSE energies (see \cref{fig:QBVAE_mnist_tinyimgnet}). Using a carefully chosen $\beta$, we compute \textit{all} memories (both preserved and novel) for each energy. The emergent memories of LSR in principle are simply the centroids of small subsets of the stored patterns, yet when decoded they appear as plausible and creative generations. 

With the same $\beta$ value, LSR generates an order of magnitude more total memories than LSE. While this choice of $\beta$ is somewhat arbitrary and could be tuned separately for each energy, LSE would only ever be able to retrieve up to $M$ memories (where $M$ is the total number of stored patterns). See full experiment details in \cref{sec:details-qualitative-reconstructions}.

Emergent memories are mechanistically simple: they are simply the centroids of small subsets of the stored patterns. The semantic novelty of the emergent memories in \Cref{fig:QBVAE_mnist_tinyimgnet} occur because the latent space is structured to be semantically meaningful, where averaging two or more stored patterns produces seemingly novel semantics. We conduct a similar experiment in \cref{sec:pixel-space-emergence} where we ablate the VAE to show what emergent memories look like in pixel space. 

\section{Discussion}

Emergent memories are a powerful tool for creating novel samples, but the ``meaningfulness'' of these samples is a nuanced question that depends heavily on the specific application domain and task requirements. For example, in \Cref{fig:LMIN0-LOGL0} (right) we show that the LSR energy approximates an unknown p.d.f. better than LSE's energy while simultaneously generating diverse samples. This represents a desirable behavior of emergence, since high log-likelihood samples from the LSE energy are quite homogeneous, causing 500 initial queries to converge to the same ${\sim}$10 memories. In this density estimation context, the emergent memories serve a clear functional purpose: they capture meaningful interpolations within the data distribution that improve generalization.

However, consider the novel memories from Tiny Imagenet in \Cref{fig:QBVAE_mnist_tinyimgnet}. Though visually plausible, many of the emergent generations appear blurry and would be considered ``undesirable'' from the perspective of a high-fidelity image generation model. We discuss the limitations of emergent memories further in \cref{sec:limitations}.

We note that there are potential parallels between emergent memories and ``hallucinations'' as observed in LLMs. We discuss the philosophical similarities between emergent memories and hallucinations in \cref{sec:hallucination}. It is also interesting to note that the LSR Energy presented in this work can have a feasible biological implementation using bipartite neurons. See further discussion in \cref{sec:biological-implementation}.

Finally, we reiterate that there are many choices for alternative kernels, and not just the Epanechnikov kernel. We discover that many kernels with compact support are capable of producing emergent memories and even manifolds, and the energy landscapes they produce can look quite different from each other. To this end, we show the ``basin merging'' behavior across different kernels in 1D in \Cref{fig:all-kernel-emergence} and we include an extended discussion on these other kernels in \cref{sec:on-kernels}.

\section{Conclusion}
Our work introduces the LSR energy function which achieves the surprising combination of exact memorization of exponentially many patterns and the emergence of new, meaningful memories, thereby providing a powerful alternative to traditional AM formulations. The properties of the LSR energy define a novel class of \textit{emergent memories} in Dense Associative Memory systems --- a phenomenon not observed in any previous AM formulations. Unlike conventional models (e.g., LSE-based energies) where generalization (formation of local minima different from training data) typically comes at the cost of perfect memorization, LSR demonstrates that these objectives can coexist harmoniously in a single energy function. We additionally demonstrated that the diverse memories created by LSR achieve log-likelihood comparable to LSE when sampling from a true density function, while generating an order of magnitude more unique memories. Finally, we showed that when applied to latent representations of real-world image datasets, LSR's emergent memories represent plausible and creative generations.  \clearpage
\bibliographystyle{unsrtnat}
\bibliography{references}
\newpage
\clearpage
\appendix
\section{Limitations}
\label{sec:limitations}

To enable both emergence and exact memorization, the LSR energy studied in this work comes with certain limitations. Unlike the LSE energy whose gradient remains nonzero regardless of the query's distance from the stored patterns, the gradient of LSR energy vanishes exactly outside the support of the stored pattern. This makes gradient-based retrieval ineffective when the query lies far from any memorized pattern (though one can easily introduce a query-dependent temperature parameter $\beta(\bx)$ that dynamically adjusts to ensure the query lies within at least one basin of attraction). Alternatively, one can create a ``hybrid'' energy function that combines the LSE and LSR energies, taking advantage of LSE's non-zero gradient everywhere outside LSR's support around the stored patterns. Such temperature-tuned DenseAMs and hybrid energies are of independent interest and we leave their systematic study to future work.

Additionally, we want to emphasize that LSR can only create emergent memories precisely at the centroid of overlapping basins between stored patterns. This makes it possible to quickly and exactly retrieve memories (see \Cref{alg:fp-search}), but it also means that the ``creativity'' of this model is limited to a predictable subset of the convex hull of the stored patterns. The apparent novelty and creativity of \Cref{fig:QBVAE_mnist_tinyimgnet} is strongly aided by the semantic structure contained the VAEs' latent space.

\section{Extended Discussion}

\subsection{On the relationship between emergent memories and hallucination} \label{sec:hallucination}

In everyday use of LLMs, we call it a \textit{hallucination} when the model confidently produces an incorrect fact, especially when we know the model is capable of producing the correct fact in other settings. To formalize this idea into the context of emergent memories, we must imagine a setting where the model has an explicit energy function and a known set of stored ``facts.''

This is the setting of Associative Memory as studied in this work, where the outputs of our model are always energy minima (memories). A ``hallucination'' can occur when two or more stored facts interfere, leading the system to settle into an emergent minimum that looks meaningful (i.e., it lies near the ``factual manifold'' that generated our stored patterns) but does not correspond to any true, stored fact. Emergent memories are precisely these ``interference minima'' when they coexist with the original stored patterns. By definition, they are not stored facts, yet they can resemble meaningful combinations of them --- making them the memory-system analog of hallucinations.

\subsection{On a biological implementation of emergent memories} \label{sec:biological-implementation}

Dense AMs permit a standard mapping onto ``biological'' networks by introducing auxiliary hidden neurons following the method of \cite{krotov2021large}. Thus, the feature neurons in both LSE and LSR models can be augmented with auxiliary hidden neurons, resulting in a model with only pair-wise neuron interactions and bipartite connectivity between feature and hidden layers. This augmented model is more biological compared to the model considered in the main paper. The proposed model defined by the energy \cref{eq:lsr-energy} and the corresponding update rule \cref{eq:gradlsr} can be obtained by integrating out these auxiliary hidden neurons, which means LSR energy will still have the same emergent behavior. The augmented model will still have some un-biological aspects because of the weight symmetry between forward and backward projections, which is necessary to ensure that the network has a global energy function.

\section{On Kernels}
\label{sec:on-kernels}

We show different kernels that are typically used for KDE and their efficiency relative to the Epanechnikov kernel in \cref{fig:kernel-shapes}. See the explanation on optimal kernel density estimation in \cref{sec:kde} for more details.

\begin{figure}[h]
\centering
\includegraphics[width=\linewidth]{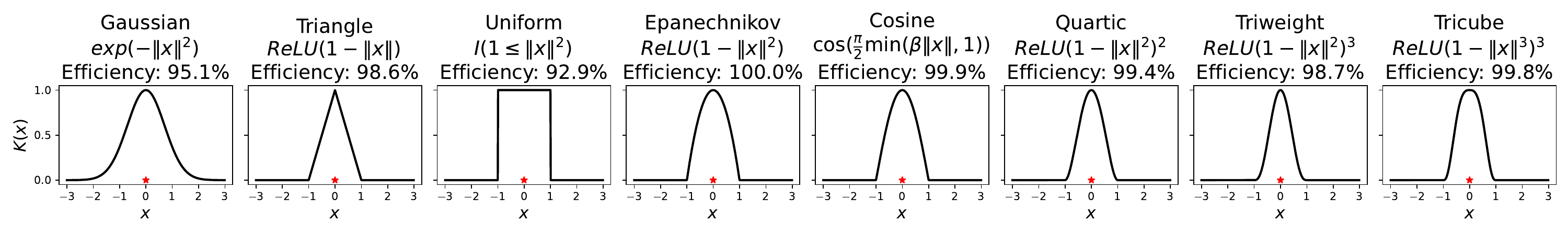}
\caption{Different kernels used in KDE with their expression and KDE efficiency relative to the Epanechnikov kernel ({\em higher is better}, see text for details). The center of each kernel is marked with a \textcolor{Red}{red $\star$}. To highlight the shape of the kernel, we have removed any scaling in the kernel expression. Note that all above kernels except Gaussian have finite support. The Epanechnikov kernel has the highest efficiency (100\%). While the Gaussian kernel is extremely popular, and it is more efficient (95.1\%) than the Uniform kernel (92.9\%), there are various other kernels with better efficiency.}
\label{fig:kernel-shapes}
\end{figure}

Though our main submission focuses on the Epanechnikov kernel, we find that the phenomenon of emergence (\Cref{def:local-emg-mem,def:global-emg}) is not limited to the Epanechnikov kernel. We state our findings for other popular kernels below and graphically in \Cref{fig:all-kernel-emergence}. In summary, all compact support kernels are capable of producing emergent memories \emph{except} for the TriWeight kernel (meanwhile, the uniform kernel is impractical for associative memory). Details below:

\begin{enumerate}
    \item \textbf{Triangle kernel} --- The triangle kernel exhibits very interesting emergence behavior. A perfectly \emph{flat} energy manifold is formed where two or more basins overlap. If the energy of this manifold is lower than the energy of the stored patterns, the stored patterns ``merge'' and are no longer retrievable. If the energy of the manifold is higher, both original patterns are preserved and we observe a local \emph{emergent manifold} of memory.
    
    \item \textbf{Uniform kernel} --- A uniform kernel produces an energy landscape that is impractical for associative memory because the gradient is zero everywhere except at the discontinuities. Emergent minima exist according to \Cref{def:local-emg-mem} and have exclusively lower energy than the stored patterns.

    \item \textbf{Triweight kernel} --- We were not able to find emergence at any temperature with the triweight kernel, despite its compact support. Indeed, looking at the $\beta=0.19$ row of \Cref{fig:all-kernel-emergence}, we see that the transition of two basins merging results in a single, almost flat minimum. This phenomenon of compact support without emergence (or perhaps, emergence that is limited to an extremely narrow range of $\beta$ that we were not able to find) requires further investigation.

    \item \textbf{Quartic kernel} --- The quartic kernel produces smoother energy landscapes than the Epanechnikov does, and emergence appears within a very narrow range of $\beta$. Unlike the Epanechnikov kernel, overlapping basins do not guarantee emergent minima, and the emergent minimum is unlikely to have lower energy than the stored patterns.

    \item \textbf{Tricube kernel} --- The tricube kernel behaves like the Quartic kernel, but emergent memories are likely to have lower energy than the stored patterns. The Tricube kernel has an interesting property where local minima flatten right before basins merge. The resulting energy landscape is smoother than that of the Epanechnikov kernel.
    
    \item \textbf{Cosine kernel} --- The cosine kernel looks remarkably like the Epanechnikov kernel, and its emergence properties are almost identical to that of LSR. However, it appears that Cosine-emergent memories have slightly higher energy than their Epanechnikov counterpart.
    
\end{enumerate}

\begin{figure}[!t]
    \centering
    \includegraphics[width=\linewidth]{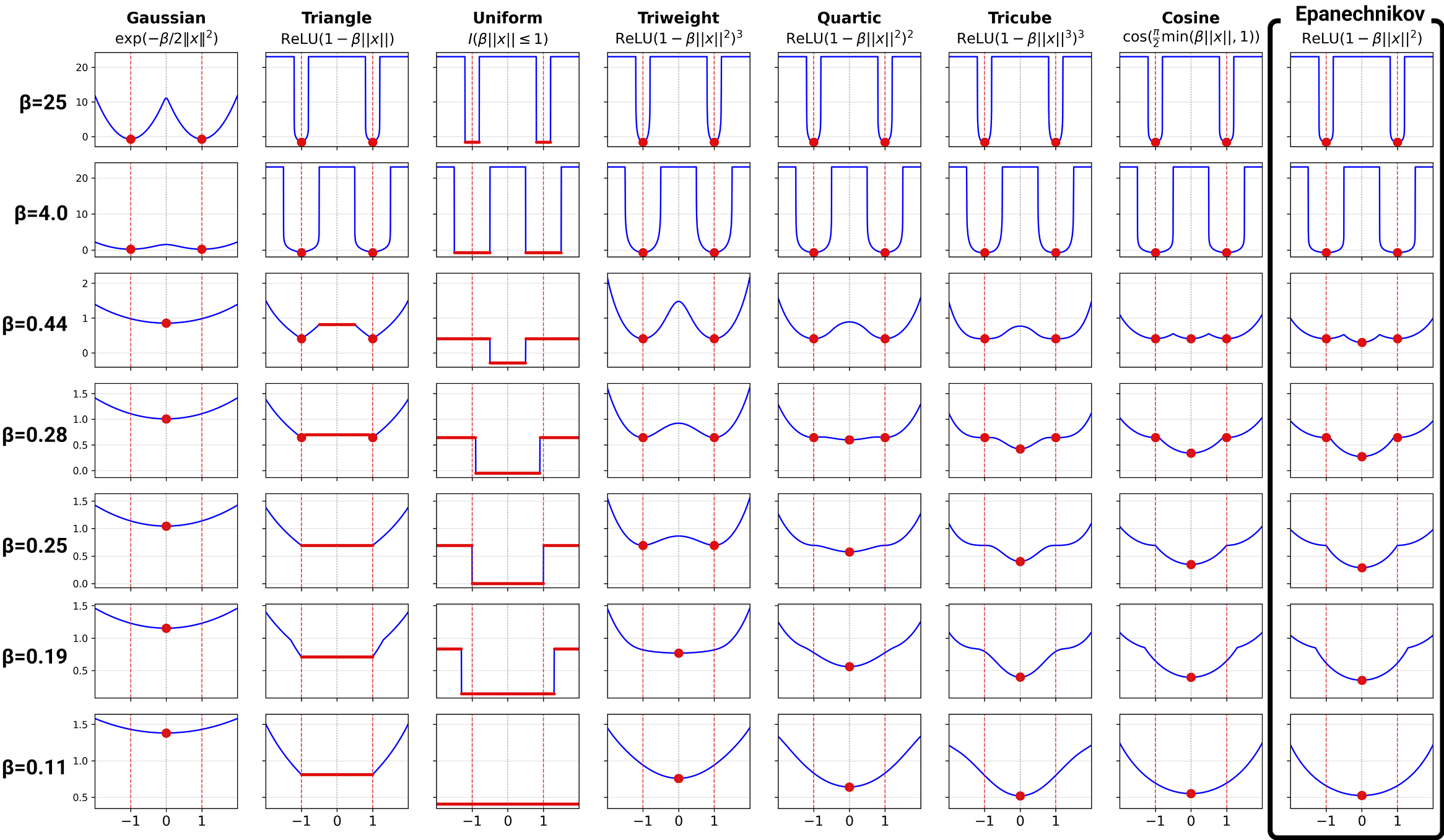}
    \caption{Comparing emergence across different choices of kernel in the DenseAM energy function.  Emergent memories are highlighted in red, where manifolds are shown as a flat line and single points as larger dots. Interestingly, all compact kernels exhibit some form of emergence \emph{except} the TriWeight kernel.}
    \label{fig:all-kernel-emergence}
\end{figure}

\section{Experimental Details}
\label{sec:exp-details-appendix}

\subsection{Reproducibility \& Technical Resources}
The codebase is published in a \href{https://github.com/bhoov/EpanechnikovDenseAM}{GitHub repository} and contains necessary instructions to setup the environment and to recreate all results, down to the same seed used for training and sampling. Experiments use both PyTorch~\cite{paszke2019pytorch} and JAX~\cite{jax2018github}. Each experiment was run on a system with access to 8xA40 GPUs each with 40GB of memory. The log-likelihood experimental sweep in \cref{sec:details-LOGL0} took ${\sim}12$ hours using an optimized scheduler across 45 available CPU cores (memory turned out to be the bottle neck and the CPUs had access to much more RAM than the GPU). Searching for all novel minima at different $\beta$ and $M$ in \cref{sec:details-LMIN0} took ${\sim}48$ hours using an optimized scheduler across all 8 GPUs. Training the VAE in \cref{sec:details-qualitative-reconstructions} took $<30$ min on a single GPU; enumerating all local minima using the efficient \cref{alg:beta-search} and \cref{alg:fp-search} took $<15$ min.

\subsection{Details: Quantifying Novel Minima}
\label{sec:details-LMIN0}

In this experiment we tested across a geometrically spaced range of $\beta \in [2d^{-1}, 2r_\text{min}^{-2}]$, where $r_\text{min} := \min_{\mu \neq \nu} \norm{\bxi_\mu - \bxi_\nu}$ is the minimum pairwise distance between any two stored patterns in the current subset $\mathcal{K} \subseteq \iset{M}$. 
At the largest $\beta$, the support regions of the stored patterns are disjoint and the only memories are the $M$ stored patterns themselves; this configuration has a very small support region (shown as the {\hexcolor{238B23}{shaded green}} curve in \cref{fig:LMIN0-LOGL0}, which is computed by monte carlo sampling 1e6 points on the unit hypercube and computing the fraction of energies that are finite at $\epsilon = 0$) as a fraction of the unit hypercube. At the smallest tested $\beta$, only a single energy minimum is induced at the centroid of all stored patterns with a region of support covering the whole unit hypercube. At the largest tested $\beta$, all original memories are recoverable and there are no spurious memories.

\subsection{Details: Generative quality of memories}
\label{sec:details-LOGL0}

\subsubsection{Additional experiments}

We ran the same experiment in \cref{sec:logl0-main} under varying dimensions $d=[8,16]$ and number of mixtures $k=[5,10]$, averaging the results of each run across 5 different random seeds. The results for each of these experiments is shown below in \cref{fig:LOGL0-app-k5-mixture,fig:LOGL0-app-k10-mixture}, where \cref{fig:LOGL0-app-k10-mixture} (left) is the same as reported in \cref{fig:LMIN0-LOGL0} (right) in the main paper.

\begin{figure}[h]
    \centering
    \includegraphics[width=\textwidth]{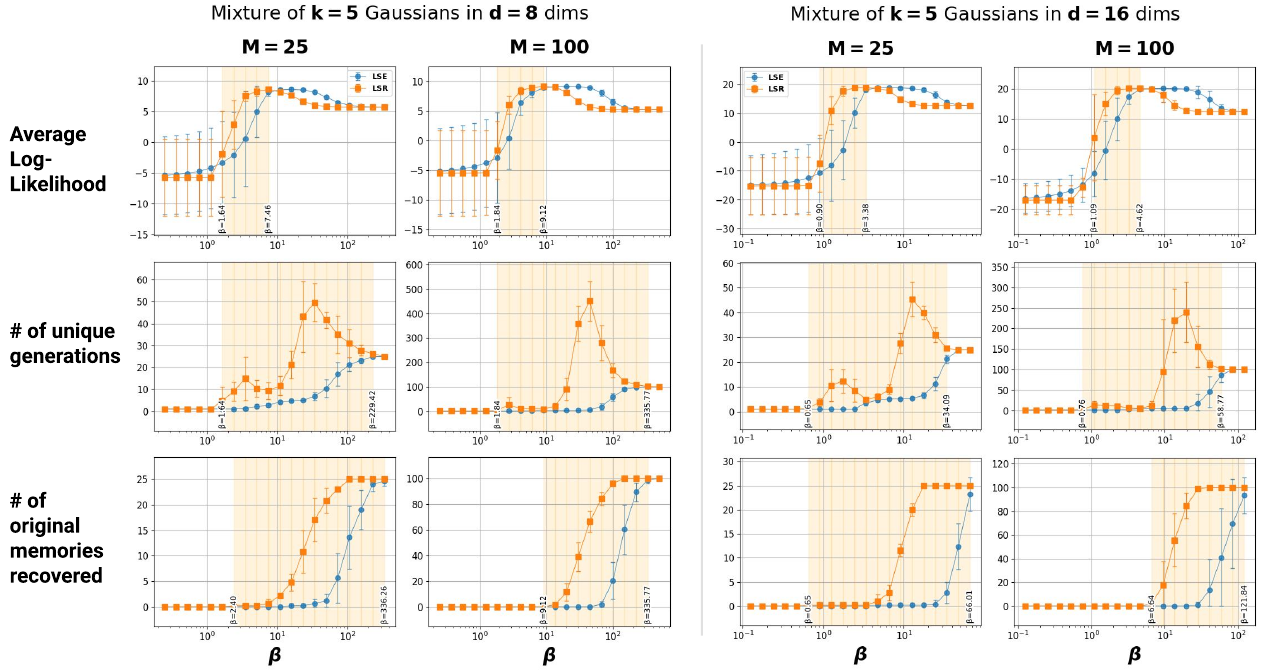}
    \caption{Comparing $d=8$ and $d=16$ for $k=5$ mixture of gaussians at number of stored patterns $M=10$ and $M=100$. Error bars are computed by averaging the results of 5 different random seeds. Regions of $\beta$ where LSR outperforms LSE on a particular metric (on average) are shaded orange.}
    \label{fig:LOGL0-app-k5-mixture}
\end{figure}

\begin{figure}[h]
    \centering
    \includegraphics[width=\textwidth]{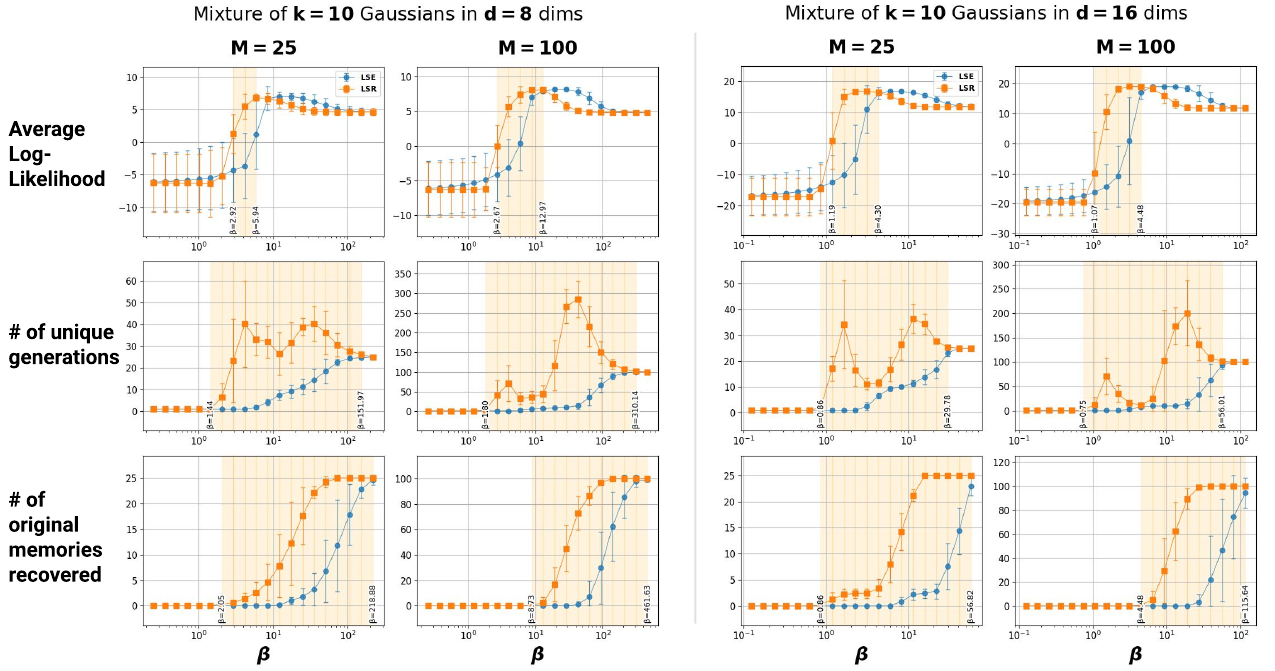}
    \caption{Comparing $d=8$ and $d=16$ for $k=10$ mixture of gaussians at number of stored patterns $M=10$ and $M=100$. Error bars are computed by averaging the results of 5 different random seeds. Regions of $\beta$ where LSR outperforms LSE on a particular metric (on average) are shaded orange.}
    \label{fig:LOGL0-app-k10-mixture}
\end{figure}

\subsubsection{Aligning $\beta$'s across random seeds}

We used the same setup for choosing an interesting range of $\beta$ as we did in \cref{sec:details-LMIN0}. However, over random seeds the value for $r_\text{min}$ can vary since it is dependent on the random choice of stored patterns. This makes it difficult to plot error bars over an individual $\beta$ across seeds. To fix this, we use the fact that each experiment has the same number of geometrically spaced $\beta$'s that start from the same $\beta_\text{low}$ and compute statistics averaged across the $\beta$'s that share an index. The $x$-axis then represents the $\beta$ value for each index averaged over seeds.

\subsubsection{Determining the uniqueness of minima}

To sample the LSE minima, $500$ initial points are uniformly sampled from the support boundary around each memory and gradient descent \cref{eq:discrete-memory-retrieval} is performed for 13000 steps at a cosine-decayed learning rate $\alpha$ from $0.01 \rightarrow 0.0001$. However, even after this descent process, there are variations in the retrieved memories due to discrete step size $\alpha$ and floating point precision requiring us to be careful when deciding if two samples are distinct. 
Generally, memory retrieval is said to converge when $\norm{\nabla_{\bx} E(\bx)} < \epsilon$ for some small $\epsilon > 0$, or when the number of iterations $T$ exceeds some threshold at small $\alpha$. 
Because we know the $\beta$ of the LSE energy, by the properties of the Gaussian kernel we know that two basins merge when their means are within two standard deviations of each other. Thus, we can say that two distinct samples are generated by the same memory if they are within $\frac{2}{\sqrt{\beta}}$ of each other. 

When counting the uniqueness of the samples from the LSR energy, we perform the following trick to exactly compute the fixed points of the dynamics. We first compute our ``best guess'' for the fixed point by performing standard gradient descent according to \cref{eq:discrete-memory-retrieval} for $T$ steps, at which point $\bz := \bx^{(T)}$ is close (but not exactly equal) to the fixed point $\bx^\star$. We then pass $\bz$ to \cref{alg:epa-fixed-point} to compute the fixed point exactly. With a good initial guess $\bz$, \cref{alg:epa-fixed-point} converges after a single iteration. 

Finally, we choose to sample points near the support boundary of each stored pattern because this maximizes the probability that we will end up in a spurious minimum. The size of spurious basins in high dimension can be very small, and the probability of landing in them decreases rapidly with increasing $\beta$ (see the region of support plot in \cref{fig:LMIN0-LOGL0}). 

\begin{algorithm}[t]
    \SetAlgoLined
    \DontPrintSemicolon
    \KwIn{Initial guess $\bz$, stored patterns $\{\bxi_\mu\}_{\mu=1}^M$, inverse temperature $\beta$}
    \KwOut{Fixed point $\bz^\star$}
    Initialize previous point $\bz_\text{prev} \gets \bz + \infty$\;
    \While{$\bz_\text{prev} \neq \bz$}{
        $\bz_\text{prev} \gets \bz$\;
        Compute supports near $\bz$ $S(\bz) \gets \{\bxi_\mu : \norm{\bz - \bxi_\mu} \leq \sqrt{\frac{2}{\beta}}\}$\;
        Update to mean of support centroids $\bz \gets \frac{1}{|S(\bz)|}\sum_{\bxi_\mu \in S(\bz)} \bxi_\mu$
    }
    \Return{$\bz$}
    \caption{Fixed Point Computation for the LSR Memory Retrieval}
    \label{alg:epa-fixed-point}
\end{algorithm}

\subsection{Details: Qualitative reconstructions}
\label{sec:details-qualitative-reconstructions}

The experiments in \cref{fig:QBVAE_mnist_tinyimgnet} are conducted in two steps. First, we \textbf{design an energy} for each dataset. Then, we \textbf{discover all memories} for each system.

\paragraph{Designing the energy} The DenseAM energies studied in this work are described by a matrix of stored patterns $\bXi$ and an inverse temperature hparam $\beta$.

\textit{Choosing $\bXi$}. For MNIST, $\bXi$ is obtained as follows: 24 random images from the MNIST training set are normalized to be $[0,1]$, rasterized into a 784-dim vector, and projected into a 10-dim latent space of a $\beta$-VAE trained according to the methods laid out below. These latents become our stored patterns $\bXi_\text{MNIST} \in [0,1]^{24 \times 768}$ that we use to parameterize both the LSR energy and the LSE energy. The procedure for TinyImagenet~\cite{Le2015TinyIV} is similar. We randomly select 40 images from the dataset, each of shape \texttt{(C,H,W)=(3,64,64)}. These samples are passed through a small pretrained VAE called TAESD~\cite{madebyollin2023taesd} to produce latents that are of shape \texttt{(4,8,8)} which are then rasterized into vectors of shape (256,). Thus, our stored pattern matrix for TinyImagenet is $\bXi_\text{TinyImgnet} \in \mathbb{R}^{40 \times 256}$.

\textit{Choosing $\beta$~} Tuning $\beta$ for the LSR energy is challenging. When $\beta$ is too small, each stored pattern interacts with all other stored patterns to induce one minimum at the centroid and several minima far from the data distribution; when $\beta$ is too large, each stored pattern will interact with no other stored patterns and will induce only a single minimum at itself. Neither of these regimes are interesting. For large ranges of $\beta$ between these limits, the combinatorial search space of possible memories is computationally prohibitive. For this reason, we use a binary search algorithm to choose a $\beta$ that, on average, causes each stored pattern to interact with approximately 4 other stored patterns (in the 10-dim MNIST case) and 2 other stored patterns in the 256-dim TinyImagenet case. This $\beta$ encourages the LSR energy to exhibit emergence where it is computationally feasible to enumerate all memories, and we use it for both the LSR and LSE energy experiments. In pseudocode, the search algorithm is given by \cref{alg:beta-search}

\textit{Training the $\beta$-VAE for MNIST}. MNIST images are encoded by a $\beta$-VAE~\cite{kingma2022autoencodingvariationalbayes,higgins2017betavae} with a latent dimension of 10. The VAE takes as input the 784-dim rasterized MNIST images. The VAE's encoder and decoder are two layer MLPs configured with LeakyReLU and BatchNorm activations, with a hidden dimension size of 512. Training proceeded for 50 epochs using a learning rate of 1e-3, the Adam optimizer, and a minibatch size of 128. The $\beta$ of the $\beta$-VAE (distinct from the inverse temperature $\beta$ used by the LSR energy) is set to 4.

\textbf{Discovering all memories} Once we have chosen a $\beta$ that results in a feasible number of basin interactions $K$ (where the combinatorial search space is tractable), all memories for LSR are discovered by filtering the set of ``memory candidates'' --- the set of centroids formed by at overlapping basins around each stored patterns --- to those whose energy gradient is zero. This method is explicitly described in \cref{alg:fp-search}, which iterates through each stored patterns $\bxi_\mu$ and only searches for emergent memories formed by near-enough stored patterns.

All LSE memories are discovered via gradient descent. We initialize queries $\bx_\mu = \bxi_\mu$ and perform gradient descent according to \cref{eq:discrete-memory-retrieval} until convergence (for this experiment, we iterated for 20k steps at small step-size $\alpha = 0.002$). The retrieved fixed points are the complete set of LSE memories.

\begin{algorithm}[t]
    \SetAlgoLined
    \DontPrintSemicolon
    \KwIn{Desired avg. interactions per pattern $K$, stored patterns $\{\bxi_\mu\}_{\mu=1}^M$, max iterations $n_\text{max}$}
    \KwOut{Optimal $\beta^*$ achieving number of basin interactions $K$}
    \SetKwProg{Compute}{Compute}{}{end}
    \Compute{}{
        Distance matrix $D_{\mu\nu} \gets \{ \norm{\bxi_\mu - \bxi_\nu} \}$\;
        Binary bounds $(r_\text{min}, r_\text{max}) \gets (0.5 \min(D), 4 \max(D))$\;
        Initial basin radius $r \gets \text{mean}(r_\text{min}, r_\text{max})$\;
    }
    $n_\text{iter} \gets 0$\;
    \Repeat{$K' = K \;\mathrm{ or }\; n_\text{iter} \geq n_\text{max}$}{
        Compute avg. number of interacting basins per memory $K' \gets \text{mean}_\mu \left(\sum_{\nu=1}^M \indicator{D_{\mu\nu} \leq 2r} \right)$ \;
        Update binary search conditions\; 
        $\;\;\;r_\text{min} \gets r \text{ if } K' < K$\;
        $\;\;\;r_\text{max} \gets r \text{ if } K' > K$\;
        $\;\;\;r \gets \text{mean}(r_\text{min}, r_\text{max})$\;
        $n_\text{iter} \gets n_\text{iter} + 1$\;
    }
    $\beta^* \gets 2 / r^2$\;
    \Return{$\beta^*$}
    \caption{Binary Search over $\beta$ to achieve desired memory interactions}
    \label{alg:beta-search}
\end{algorithm}

\begin{algorithm}[t]
    \SetAlgoLined
    \DontPrintSemicolon
    \KwIn{Stored patterns $\{\bxi_\mu\}_{\mu=1}^M$, inverse temperature $\beta$, gradient norm threshold $\delta$ near $0$}
    \KwOut{Set of LSR memories $\mathcal{X}^*$.}
    \SetKwProg{Compute}{Compute}{}{end}
    \Compute{}{
        Distance matrix $D_{\mu\nu} \gets \{ \norm{\bxi_\mu - \bxi_\nu} \}$\;
        Basin radius $r \gets \sqrt{2/\beta}$\;
    }
    Initialize set of local minima $\mathcal{X}^* \gets \emptyset$\;
    \For{$\mu \in \iset{M}$}{
        Compute set of interacting neighbors $\bX_\mu \gets \{ \bxi_\nu : D_{\mu\nu} \leq 2r \}$\;
        Compute the set of all non-empty subsets $\mathcal{C}\gets \{ S \subseteq \bX_\mu : \text{size}(S) > 0 \}$\;
        \For{$S \in \mathcal{C}$}{
            Compute centroid of neighbors $\bar{\bx}_S \gets \mathrm{mean}(S)$\;
            Compute $\bar{\bx}_S$ neighbors $T_S \gets \{ \bxi_\nu : \norm{\bxi_\nu - \bar{\bx}_S} \leq r \}$\;
            \If{$\norm{\nabla E_\mathrm{LSR}(\bar{\bx}_S)} < \delta \;\mathrm{\textbf{\&}}\; E_\mathrm{LSR}(\bar{\bx}_S) < \infty$}{
                Update set of local minima $\mathcal{X}^* \gets \mathcal{X}^* \cup \{ \bar{\bx}_S \}$
            }
        }
    }
    \Return{$\mathcal{X}^*$}
    \caption{Discover local minima of the LSR energy at a specific $\beta$.}
    \label{alg:fp-search}
\end{algorithm}

\begin{figure}[!b]
    \centering
    \includegraphics[width=\textwidth]{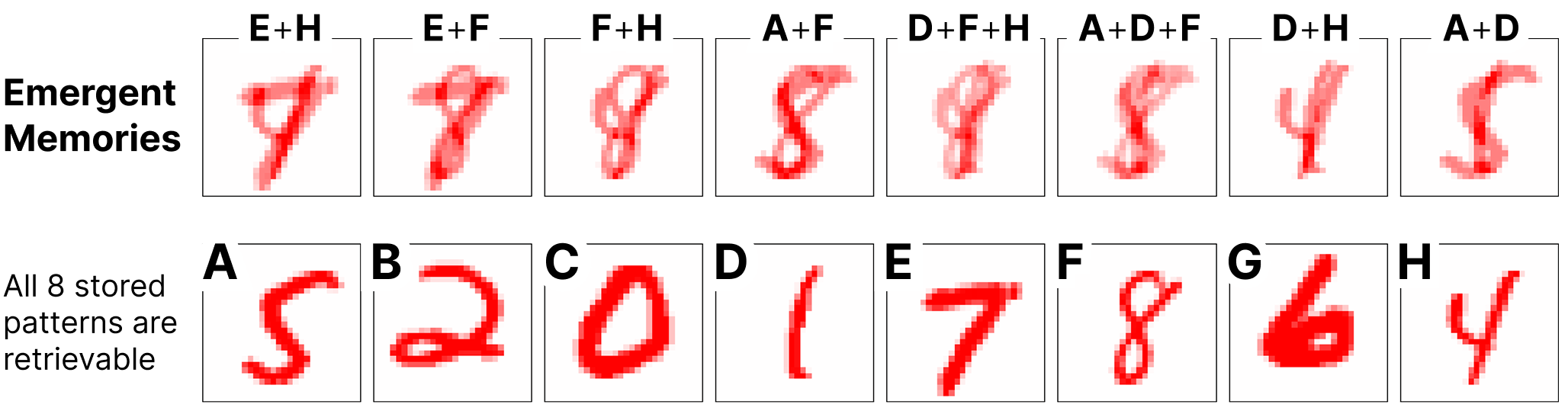}
    \caption{\textbf{Emergent memories} are centroids of subsets of stored patterns, shown clearly when 8 stored images are visualized in pixel space alongside their induced emergent memories. Stored patterns (bottom, indexed \textbf{A}-\textbf{H}) merge to form emergent memories (top, labeled by the stored patterns that merged to form the emergent memory). $\beta$ is chosen such that the number of emergent memories is approximately the same as the number of stored patterns.}
    \label{fig:pixel-space-emergence}
\end{figure}

\subsection{Additional experiment: Pixel-space emergence}
\label{sec:pixel-space-emergence}

When the stored patterns live in a semantically structured latent space, as is done in \cref{fig:QBVAE_mnist_tinyimgnet}, the energy landscape inherits the structure such that centroids of subsets of stored patterns appear semantically novel. However, the latent space visually obscures the mechanistic simplicity of emergent memories. Thus, we repeat the experiment in pixel space to reinforce how emergent memories work.

We store 8 randomly selected MNIST images as rasterized pixels (normalized between 0 and 1) into the LSR energy. The resulting emergent minima and stored patterns are shown in \Cref{fig:pixel-space-emergence}, where the $\beta$ is chosen to balance the number of emergent minima and the retrievability of the stored patterns (\emph{global emergence} \Cref{def:global-emg}).

\subsection{Additional experiment: Scaling number of stored patterns}

The LSR energy function is simple and its properties hold across any scale of stored patterns. To show this, we store all 60,000 MNIST training images into the LSR energy and select a $\beta$ for which ${\sim}50$\% of the stored patterns are still retrievable. We select a ``seed'' image at random and randomly select 15 images from all images whose basins interact with the seed image at the chosen $\beta$. This example is shown in \Cref{fig:BSTOR-mnist}.

\begin{figure}[t]
    \centering
    \includegraphics[width=0.9\textwidth]{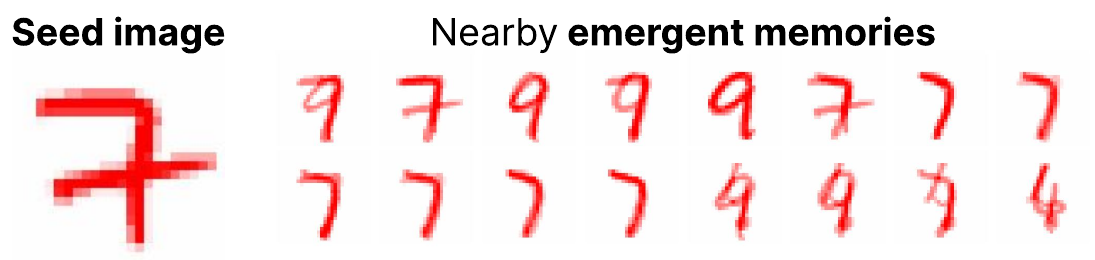}
    \caption{Sampling emergent memories near a \textbf{seed image} when all 60k MNIST training images are stored into the LSR energy. Left: Random seed image, which is a preserved memory. Right: 16 randomly sampled emergent memories formed by the seed image's interactions with other stored patterns (at $\beta=0.11$). Because the seed image interacts with the basins of ${\sim}7.3$k other stored patterns, these emergent memories represent a \emph{tiny sample} of the total emergent memories near the seed image.} 
    \label{fig:BSTOR-mnist}
\end{figure}

\section{Proofs}
\label{sec:appendix:proofs}

\subsection{Proof of \Cref{thm:retrieval}}

For any $\bx \in \gX$, let $B(\bx) = \{\mu \in \iset{M}: \norm{\bx - \bxi_\mu}^2 \leq 2/\beta \}$. Then the gradient of the LSR energy in~\cref{eq:lsr-energy} is given by
\begin{align}\label{eq:gradlsr}
\nabla_\bx \Eepamem(\bx) 
& =
\sum_{\mu=1}^M \frac{(\bx - \bxi_\mu) \indicator{\norm{\bx - \bxi_\mu}^2 \leq \frac{2}{\beta}}}{\epsilon + \left[\sum_{\nu=1}^M \relu{1 - \frac{\beta}{2} \norm{\bx - \bxi_\nu}^2}\right]}
\\
& =
\frac{ \sum_{ \mu \in B(\bx) } (\bx - \bxi_\mu) }{\epsilon + \left[\sum_{\nu \in B(\bx)} \relu{1 - \frac{\beta}{2} \norm{\bx - \bxi_\nu}^2}\right]}
\end{align}
With $\beta = 2/(r - \Delta)^2$, $B(\bx) = \{\mu \in \iset{M}: \norm{\bx - \bxi_\mu} \leq (r - \Delta) \}$. For any $\bx \in S_\mu(\Delta)$, $B(\bx) = \{ \mu \}$. Thus the LSR energy gradient simplifies to
\begin{align}
\forall \bx \in S_\mu(\Delta), \quad 
\nabla_\bx \Eepamem(\bx) 
& =
\frac{ (\bx - \bxi_\mu) }{\epsilon + \relu{1 - \frac{\beta}{2} \norm{\bx - \bxi_\mu}^2} },
\end{align}
which is exactly zero at $\bx = \bxi_\mu$, thus giving us the retrieval of the $\mu^{\textsf{th}}$ memory via energy gradient descent.

Furthermore, again for $\bx \in S_\mu(\Delta)$ with a energy gradient descent learning rate set to $\eta \gets \epsilon + \relu{1 - \nicefrac{\beta}{2} \norm{\bx - \bxi_\mu}^2}$, the update is exactly $\eta \nabla_\bx \Eepamem(\bx) =(\bx - \bxi_\mu)$. Thus a single step gradient descent update to $\bx$ with $\bx - \eta \nabla_\bx \Eepamem(\bx) = \bx - (\bx - \bxi_\mu) = \bxi_\mu$ results in the retrieval of the $\mu^{\textsf{th}}$ memory.
\subsection{Proof of \Cref{prop:spurious}}

\begin{proof}
(\Cref{prop:spurious}.)

For any $\bx \in \gX$, given the definition of $B(\bx)$, recall that the gradient of the LSR energy is given in \cref{eq:gradlsr}.
%
This gradient is zero when $$\bx = \frac{1}{|B(\bx)|} \sum_{\mu \in B(\bx)} \bxi_\mu,$$ the geometric mean of the memories corresponding to the set $B(\bx)$. Moreover, by standard algebraic computations, we have
\begin{align*}
    \nabla^2_{\bx}E_{LSR}(\bx^*)=\frac{|B(\bx^*)|}{\epsilon + \left[\sum_{\nu=1}^M \relu{1 - \frac{\beta}{2} \norm{\bx^* - \bxi_\nu}^2}\right]}I_d\succ 0. 
\end{align*}
\end{proof}

\subsection{Proof of \Cref{thm:main}}
Note that once we prove properties (a) and (b), the $\varepsilon$-global emergence in Definition \ref{def:global-emg} follows immediately.

\textbf{We first focus on (a)}. Note that for any fixed $\bx\in\mathcal{X}$ and $\varepsilon>0$, $\text{vol}(B(\bx,\varepsilon)\cap \mathcal{X})$ is at most $\text{vol}(B(\bx,\varepsilon))=V_d\varepsilon^d$, where $V_d$ is the volume of the unit ball in $\mathbb{R}^d$. Thus, we obtain: for any pair $(\bxi_\mu,\bxi_\nu)$ with $\mu\neq \nu$, we have:
    \begin{align*}
        \Pr\Big[\|\bxi_\mu-\bxi_\nu\|< \varepsilon\Big]\le \int_{\mathcal{X}}\frac{\text{vol}(B(x,\varepsilon)\cap \mathcal{X})}{\text{vol}(\mathcal{X})}dx\le V_dV^{-1}\varepsilon^d.
    \end{align*}
    Then, we have:
    \begin{align}\label{cone_min_distance}
        \Pr\Big[\min_{1\le \mu\neq \nu\le M}\|\bxi_\mu-\bxi_\nu\|< \varepsilon\Big]\le M^2V_dV^{-1}\varepsilon^d.
    \end{align}
    Now, we let $\varepsilon=(V_d/V)^{-1/d}e^{-2\alpha}$ for a positive $\alpha$. Thus, with probability at least $1-M^2e^{-2\alpha d}$, the minimum pairwise Euclidean distance $r=\min_{1\le i\neq j\le M}\|\bxi_\mu-\bxi_\nu\|\ge (V_d/V)^{-1/d}e^{-2\alpha}$. Then, with
    \begin{align*}
        0<\beta =\frac{2}{(r-\Delta)^2}\le \frac{2}{((V_d/V)^{-1/d}e^{-2\alpha}-\Delta)^2},
    \end{align*}
    we are able to retrieve any memory $\bxi_\mu$ with an $\bx\in S_\mu(\Delta)$, for $\Delta\in (0,(V_d/V)^{-1/d}e^{-2\alpha})$. Moreover, the size of $M$ is given by the success probability:
    \begin{align*}
        \delta=1-M^2e^{-2\alpha d}.
    \end{align*}
    This gives
    \begin{align*}
        M= \Theta\left(\sqrt{1-\delta}e^{\alpha d}\right).
    \end{align*}

\textbf{Next, we focus on  (b)}. Recall the gradient is given by
\begin{align*}
    \nabla_{\bx}E_{\text{LSR}}(\bx)=\frac{\sum_{\mu\in B(\bx)}(\bx-\bxi_\mu)}{\epsilon+\left[\sum_{\mu\in B(\bx)}\text{ReLU}\left(1-\frac{\beta}{2}\|\bx-\bxi_\mu\|^2\right)\right]},
\end{align*}
where it is determined by the set
\begin{align*}
    B(\bx)=\left\{\mu:\|\bx-\bxi_\mu\|<\sqrt{\frac{2}{\beta}}\right\}.
\end{align*}

Let $S_{\bx^*}(\Delta)=\{\bx:\|\bx-\bx^*\|\le \Delta\}$ be a basin around an emergent memory $\bx^*$. Note that by Cauchy–Schwarz inequality, the change within the basin of the squared distance to any $\bxi_\mu$ is at most
\begin{align*}
    |\|\bx-\bxi_\mu\|^2-\|\bx^*-\bxi_\mu\|^2|\le 2\|\bx^*-\bxi_\mu\|\Delta+\Delta^2.
\end{align*}
We now for activated memories $\mu\in B(\bx^*)$ let 
\begin{align*}
    \delta_{\min}(\bx^*)=\min_{\mu\in B(\bx^*)}\left(\frac{2}{\beta}-\|\bx^*-\bxi_\mu\|^2\right)> 0.
\end{align*}
This is the margin to the boundary of activation. Moreover, for inactivated memories $\nu\notin B(\bx^*)$, let
\begin{align*}
    \gamma_{\min}(\bx^*)=\min_{\nu\notin B(\bx^*)}\left(\|\bx^*-\bxi_\nu\|^2-\frac{2}{\beta}\right)> 0.
\end{align*}
This is the margin to being activated. Given $\{\bxi_\mu\}_{\mu=1}^M$ has a density, the probability of either of them being exact zero is zero. 

Finally, we determine the radius $\Delta$. To ensure that for any $\bx\in S_{\bx^*}(\Delta)$, $\nabla_{\bx}E_{\text{LSR}}(\bx)=\nabla_{\bx}E_{\text{LSR}}(\bx^*)$, it suffices to make sure $B(\bx)=B(\bx^*)$ for any $\bx\in S_{\bx^*}(\Delta)$. To this end, we can pick $\Delta$ such that
\begin{align*}
    2D_{\max}(\bx^*)\Delta+\Delta^2<\min\{\delta_{\min}(\bx^*), \gamma_{\min}(\bx^*)\},\ \text{where}\ D_{\max}(\bx^*)=\max_{\mu}\|\bx^*-\bxi_\mu\|.
\end{align*}
This gives
\begin{align*}
    \Delta<\sqrt{D_{\max}^2(\bx^*)+\min\{\delta_{\min}(\bx^*), \gamma_{\min}(\bx^*)\}}-D_{\max}(\bx^*)=:r^*.
\end{align*}

\vspace{0.1in}
\textbf{To prove the final claim}, without loss of generality, we consider the case when $\mathcal{X}=[0,1]^d$ with $V=1$. Since $\{\bxi_\mu\}_{\mu=1}^M$ are i.i.d. uniform, $\mathds{1}\left[\bxi_\mu\in \left\{\|\bx-\bxi_\mu\|\le \sqrt{\frac{2}{\beta}}\right\} \right]$ are i.i.d. indicators over $\mu=1,\ldots,M$, by the concentration of sub-Gaussian random variables \cite[Proposition 2.5]{wainwright2019high}, we have: with probability at least $1-M^{-1}$,
\begin{align}\label{lambda_conc-2}
     &\lambda=|B(\bx)|\nonumber\\
     &\in \left[MV_d\left(\frac{2}{\beta}\right)^{\frac{d}{2}}-5\sqrt{MV_d\left(\frac{2}{\beta}\right)^{\frac{d}{2}}\log M},MV_d\left(\frac{2}{\beta}\right)^{\frac{d}{2}}+5\sqrt{MV_d\left(\frac{2}{\beta}\right)^{\frac{d}{2}}\log M}\right].
\end{align}
This implies $\lambda=\Theta\left(MV_d\left(\frac{2}{\beta}\right)^{\frac{d}{2}}\right)$. Note that according to \Cref{prop:spurious}, for each $|B(\bx)|\in (1,M]$, there is a corresponding new emergent memory as the average of all stored patterns over $\mu\in B(\bx)$. Then, the number of new emergent memories is bounded as
\begin{align*}
    Q_{\beta,\text{LSR}}\le \binom{M}{\lambda}, \text{where}\ \lambda=\Theta\left(MV_d\left(\frac{2}{\beta}\right)^{\frac{d}{2}}\right).
\end{align*}
By Stirling's formula, we have:
\begin{align*}
    Q_{\beta,\text{LSR}}\le \left(\frac{eM}{\lambda}\right)^{\lambda}.
\end{align*}
Thus, it holds that
\begin{align*}
    \log Q_{\beta,\text{LSR}}&\le \lambda \log\left(\frac{eM}{\lambda}\right)=O\left(MV_d\left(\frac{2}{\beta}\right)^{\frac{d}{2}} \log\left(\frac{e}{V_d}\left(\frac{\beta}{2}\right)^\frac{d}{2}\right)\right).
\end{align*}
This implies 
\begin{align*}
    Q_{\beta,\text{LSR}}= O\left(\text{exp}\left(MV_d\left(\frac{2}{\beta}\right)^{\frac{d}{2}} \log\left(\frac{e}{V_d}\left(\frac{\beta}{2}\right)^\frac{d}{2}\right)\right)\right).
\end{align*}

\subsection{Proof of \Cref{prop:num_em_grid}}
Without loss of generality, we consider the case when $\mathcal{X}=[0,1]^d$ with $V=1$. Now, we divide $[0,1]^d$ into an equally spaced grid of size $M$, i.e., we divide each dimension $[0,1]$ into $[mM^{-1/d},(m+1)M^{-1/d}]$ for $m=0,\ldots,M-1$. Under our grid setting, $\{\bxi_\mu\}_{\mu=1}^M$ lie in such a grid of equal size.

Now, note that
\begin{align*}
    B(\bx)=\left\{\mu:\|\bx-\bxi_\mu\|<\sqrt{\frac{2}{\beta}}\right\}.
\end{align*}
Suppose the ball in definition of $B(\bx)$ contains $\lambda$ of those equally spaced stored patterns such that $\lambda=|B(\bx)|=\lambda\in (1,M]$. We want to put a hypercube in the grid such that it contains exact $\lambda$ out of $M$ equally spaced stored patterns in the grid. Here, for simplicity, it is up to a constant depending on $d$ if we think of the ball as a hypercube in $\mathbb{R}^d$ containing the points in the grid in this case. Hence, in each dimension $1\le i\le d$, we put an interval containing of order $\lambda^{1/d}$ points in the grid inside the entire interval containing of order $M^{1/d}$ points in the grid. It thus gives the number of choice for each dimension $1\le i\le d$ as:
\begin{align*}
    M^{1/d}-\lambda^{1/d}-1
\end{align*}
Hence, counting all choices over $d$ dimensions, we obtain the number of such $B(\bx)$ as
\begin{align*}
    \left(M^{1/d}-\lambda^{1/d}-1\right)^d.
\end{align*}
Moreover, due to the equal spacing of the grid, we have:
\begin{align}\label{lambda_conc-3}
     \lambda = \Theta\left(M\left(\frac{8}{\beta}\right)^{\frac{d}{2}}\right).
\end{align}
Hence, combining all bounds above, we obtain: the number of emergent memories under uniform sampling is of order
\begin{align*}
 \Theta\left( \left( M^{1/d} - \lambda^{1/d} +1\right)^d\right),
\end{align*}
where $\lambda$ satisfies \eqref{lambda_conc-3} and $1< \lambda\le M$.

\subsection{Proof of \Cref{prop:LSE_no_epsilon_emg}}
Recall the LSE energy:
\begin{align*}
    \Elsemem(\bx)=-\frac{1}{\beta}\log\sum_{\mu=1}^Me^{-\frac{\beta}{2}\|\bx-\bxi_\mu\|^2}.
\end{align*}
Thus, the gradient is 
\begin{align*}
    \nabla_{\bx}\Elsemem(\bx)=\bx-\sum_{\mu=1}^Mp_\mu(\bx)\bxi_\mu, \ p_\mu(\bx):=\frac{e^{-\frac{\beta}{2}\|\bx-\bxi_\mu\|}}{\sum_{\mu=1}^M e^{-\frac{\beta}{2}\|\bx-\bxi_\mu\|}}.
\end{align*}
Suppose $\bxi_\nu$ for some $\nu$ is the zero of this gradient, it yields that
\begin{align*}
    (1-p_\nu(\bxi_\nu))\bxi_\nu=\sum_{\mu\neq \nu}p_\mu(\bxi_\mu)\bxi_\mu.
\end{align*}
This is a contradiction of the assumption that $\{\bxi_\mu\}_{\mu=1}^M$ are linearly independent.  
\clearpage

\end{document}